\newcommand{\Note}[2]{}
\DeclareMathAlphabet{\mathsf}{OT1}{cmss}{m}{n}
\SetMathAlphabet{\mathsf}{bold}{OT1}{cmss}{bx}{n}
\newcommand{\removed}[1]{}
\newcommand{\E}{\mathbb{E}}
\title{\huge \bf  Dimensionality Reduction for Stationary Time Series via Stochastic Nonconvex Optimization \footnote{Working in progress.}}
\author{Minshuo Chen, Lin Yang, Mengdi Wang, Tuo Zhao \thanks{Minshuo Chen and Tuo Zhao are affiliated with School of Industrial and Systems Engineering at Georgia Tech; Lin Yang and Mengdi Wang are affiliated with Department of Operations Research and Financial Engineering at Princeton University; Tuo Zhao is the corresponding author; Email:$\{$mchen393, tourzhao$\}$@gatech.edu.}}
\date{}
\begin{document}

\maketitle

\begin{abstract}
Stochastic optimization naturally arises in machine learning. Efficient algorithms with provable guarantees, however, are still largely missing, when the objective function is nonconvex and the data points are dependent. This paper studies this fundamental challenge through a streaming PCA problem for stationary time series data. Specifically, our goal is to estimate the principle component of time series data with respect to the covariance matrix of the stationary distribution. Computationally, we propose a variant of Oja's algorithm combined with downsampling to control the bias of the stochastic gradient caused by the data dependency. Theoretically, we quantify the uncertainty of our proposed stochastic algorithm based on diffusion approximations. This allows us to prove the asymptotic rate of convergence and further implies near optimal asymptotic sample complexity. Numerical experiments are provided to support our analysis.
\end{abstract}


\section{Introduction}\label{submission}

Many machine learning problems can be formulated as a stochastic optimization problem in the following form,
\begin{align}\label{online}
\min_{u}~\E_{Z\sim\cD} [f(u, Z)]~~~~\textrm{subject~to}~ u \in \cU,
\end{align}
where $f$ is a possibly nonconvex loss function, $Z$ denotes the random sample generated from some underlying distribution $\cD$ (also known as statistical model), $u$ is the parameter of our interest, and $\cU$ is a possibly nonconvex feasible set for imposing modeling constraints on $u$. For finite sample settings, we usually consider $n$ (possibly dependent) realizations of $Z$ denoted by $\{z_1,...,z_n\}$, and the loss function in \eqref{online} is further reduced to an additive form, $$\E [f(u, z)] = \frac{1}{n} \sum_{i=1}^n f(u, z_i).$$
For continuously differentiable $f$, \cite{robbins1951stochastic} propose a simple iterative stochastic search algorithm for solving \eqref{online}. Specifically, at the $k$-th iteration, we obtain $z_k$ sampled from $\cD$ and take
\begin{align}\label{projectedSG}
u_{k+1} = \Pi_{\cU} [u_k - \eta \nabla_u f(u_k, z_k)],
\end{align}
where $\eta$ is the step-size parameter (also known as the learning rate in machine learning literature), $\nabla_u f(u_k, z_k)$ is an unbiased stochastic gradient for approximating $\nabla_u \E_{Z\sim\cD} f(u_k, Z)$, i.e., $$\EE_{z_k}\nabla_u f(u_k, z_k) = \nabla_u \E_{Z\sim\cD} f(u_k, Z),$$ and $\Pi_{\cU}$ is a projection operator onto the feasible set $\cU$. This seminal work is the foundation of the research on stochastic optimization, and has a tremendous impact on the machine learning community. 

The theoretical properties of such a stochastic gradient descent (SGD) algorithm have been well studied for decades, when both $f$ and $\mathcal{U}$ are convex. For example, \cite{sacks1958asymptotic, bottou1998online, chung2004stochastic, shalev2011pegasos} show that under various regularity conditions, SGD converges to a global optimum as $k \rightarrow \infty$ at different rates. Such a line of research for convex and smooth objective function $f$ is fruitful and has been generalized to nonsmooth optimization \citep{duchi2012randomized, shamir2013stochastic, dang2015stochastic, reddi2016proximal}.

When $f$ is nonconvex, which appears more often in machine learning problems, however, the theoretical studies on SGD are very limited. The main reason behind is that the optimization landscape of nonconvex problems can be much more complicated than those of convex ones. Thus, conventional optimization research usually focuses on proving that SGD converges to first order optimal stationary solutions \citep{nemirovski2009robust}. More recently, some results in machine learning literature show that SGD actually converges to second order optimal stationary solutions, when the nonconvex optimization problem satisfies the so-called ``strict saddle property'' \citep{ge2015escaping, lee2017first}. More precisely, when the objective has negative curvatures at all saddle points, SGD can find a way to escape from these saddle points. A number of nonconvex optimization problems in machine learning and signal processing have been shown to satisfy this property, including principal component analysis (PCA), multiview learning, phase retrieval, matrix factorization, matrix sensing, matrix completion, complete dictionary learning, independent component analysis, and deep linear neural networks \citep{srebro2004linear, sun2015complete, ge2015escaping, sun2016geometric, li2016symmetry, ge2016matrix, chen2017online}. 

These results further motivate many followup works. For example, \cite{allen2017natasha} improves the iteration complexity of SGD from $\tilde{O}(\epsilon^{-4})$ in \cite{ge2015escaping} to $\tilde{O}(\epsilon^{-3.25})$ for general unconstrained functions, where $\epsilon$ is a pre-specifed optimization accuracy; \cite{jain2016streaming, allen2016first} show that the iteration complexity of SGD for solving the eigenvalue problem is $\tilde{O}(\epsilon^{-1})$. Despite of these progresses, we still lack systematic approaches for analyzing the algorithmic behavior of SGD. Moreover, these results focusing on the convergence properties, however, cannot precisely capture the uncertainty of SGD algorithms, which makes the theoretical analysis less intuitive.

Besides nonconvexity, data dependency is another important challenge arising in stochastic optimization for machine learning, since the samples $z_k$'s are often collected with a temporal pattern. For many applications (e.g., time series analysis), this may involve certain dependency. Taking generalized vector autoregressive (GVAR) data as an example, our observed $z_{k+1}\in\RR^m$ is generated by $$z^i_{k+1} | z_k \sim p(a_i^\top z_k),$$ where $a_i$'s are unknown coefficient vectors, $z^i_{k+1}$ is the $i$-th component of $z_{k+1}$, $p(\cdot)$ denotes the density of the exponential family, and $a_i^\top z_k$ is the natural parameter. Naturally, $\{z_k\}_{k=1}^\infty$ forms a Markov chain. There is only limited literature on convex stochastic optimization for dependent data. For example, \cite{duchi2012ergodic} investigate convex stochastic optimization algorithms for ergodic underlying data generating processes; \cite{homem2008rates} investigates convex stochastic optimization algorithms for dependent but identically distributed data. For nonconvex optimization problems in machine learning, however, how to address such dependency is still quite open.

This paper proposes to attack stochastic nonconvex optimization problems for dependent data by investigating a simple but fundamental problem in machine learning --- Streaming PCA for stationary time series. PCA has been well known as a powerful tool to reduce the dimensionality, and well applied to data visualization and representation learning. Specifically, we solve the following nonconvex problem,
\begin{align}\label{PCA}
U^* \in &\argmax_{U\in\RR^{m\times r}}~ {\rm Trace}(U^\top \Sigma U)~~~~\textrm{subject~to}~ U^\top U = I_r
\end{align}
where $\Sigma$ is the covariance matrix of our interest. This is also known as an eigenvalue problem. The column span of the optimal solution $U^*$ equals the subspace spanned by the eigenvectors corresponding to the first $r$ largest eigenvalues of $\Sigma$. Existing literature usually assumes that at the $k$-th iteration, we observe a random vector $z_k$ independently sampled from some distribution $\cD$ with $$\EE [z_k]=0~~\textrm{and}~~\EE [z_kz_k^\top]=\Sigma.$$ 
Our setting, however, assumes that $z_k$ is sampled from some time series with a stationary distribution satisfying
\begin{align*}
\lim_{k \rightarrow \infty}\EE [z_k]=0~~\textrm{and}~~\lim_{k\rightarrow \infty}\EE [z_kz_k^\top]=\Sigma. 
\end{align*}

There are two key computational challenges in such a streaming PCA problem:

\noindent $\bullet$ For time series, it is difficult to get unbiased estimators of the covariance matrix of the stationary distribution because of the data dependency. Taking GVAR as an example, the marginal distribution of $z_k$ is different from the stationary distribution. As a result, the stochastic gradient at the $k$-th iteration is biased, i.e., $$\EE[z_k z_k^\top U_k | U_k] \neq \Sigma U_k;$$

\noindent $\bullet$ The optimization problem in \eqref{PCA} is nonconvex, and its solution space is rotational-invariant. Given any orthogonal matrix $Q \in \mathbb{R}^{r \times r}$ and any feasible solution $U$, the product $UQ$ is also a feasible solution and gives the same column span as $U$. When $r > 1$, this fact leads to the degeneracy in the optimization landscape such that equivalent saddle points and optima are non-isolated. The algorithmic behavior under such degeneracy is still a quite open problem for SGD.

To address the first challenge, we propose a variant of Oja's algorithm to handle data dependency. Specifically, inspired by \cite{duchi2012ergodic}, we use downsampling to generate weakly dependent samples. Theoretically, we show that the downsampled data point yields a sequence of stochastic approximations of the covariance matrix of the stationary distribution with controllable small bias. Moreover, the block size for downsampling only logarithmically depends on the optimization accuracy, which is nearly constant (see more details in Sections \ref{downsample} and \ref{theory}).

To attack nonconvexity and the degeneracy of the solution space, we establish new convergence analysis based on principle angle between $U_k$ and the eigenspace of $\Sigma$. By applying diffusion approximations, we show that the solution trajectory weakly converges to the solution of a stochastic differential equation (SDE), which enables us to quantify the uncertainty of the proposed algorithm (see more details in Sections \ref{theory} and \ref{discussion}). Investigating the analytical solution of the SDE allows us to characterize the algorithmic behavior of SGD in three different scenarios: escaping from saddle points, traversing between stationary points, and converging to global optima. We prove that the stochastic algorithm asymptotically converges and achieves near optimal asymptotic sample complexity.

There are several closely related works. \cite{chen2017online} study the streaming PCA problem for $r=1$ also based on diffusion approximations. However, $r = 1$ makes problem \eqref{PCA} admit an isolated optimal solution, unique up to sign change. For $r > 1$, the global optima are nonisolated due to the rotational invariance property. Thus, the analysis is more involved and challenging. Moreover, \cite{jain2016streaming, allen2016first} provide nonasymptotic analysis for the Oja's algorithm for streaming PCA. Their techniques are quite different from ours. Their nonasymptotic results, though more rigorous in describing discrete algorithms, lack intuition and can only be applied to the Oja's algorithm with no data dependency. In contrast, our analysis handles data dependency and can be generalized to other stochastic optimization algorithms such as stochastic generalized Hebbian algorithm (see more details in Section \ref{theory}).


\noindent {\bf Notations:} Given a vector $v = (v_1, \dots, v_m)^\top \in \mathbb{R}^m$, we define the Euclidean norm $\lVert v \rVert_2^2 = v^\top v$. Given a matrix $A \in \mathbb{R}^{m \times n}$, we define the spectral norm $\lVert A \rVert_2$ as the largest singular value of $A$ and the Frobenius norm $\lVert A \rVert_{\textrm{F}}^2 = \textrm{Trace}(AA^\top)$. We also define $\sigma_r(A)$ as the $r$-th largest singular value of $A$. For a diagonal matrix $\Theta \in \mathbb{R}^{m \times m}$, we define $\sin \Theta = \textrm{diag}\left(\sin(\Theta_{11}), \dots, \sin(\Theta_{mm})\right)$ and $\cos \Theta = \textrm{diag}\left(\cos(\Theta_{11}), \dots, \cos(\Theta_{mm})\right)$. We denote the canonical basis of $\mathbb{R}^m$ by $e_i$ for $i = 1, \dots, m$ with the $i$-th element being 1, and the canonical basis of $\mathbb{R}^r$ by $e'_j$ for $j = 1, \dots, r$. We denote $x \asymp y$, meaning that $x$ and $y$ are asymptotically equal.


\section{Bias Control for SGD by Downsampling}\label{downsample}

This section devotes to constructing a nearly unbiased covariance estimator for the stationary distribution, which is crucial for our SGD algorithm. Before we proceed, we first briefly introduce geometric ergodicity for time series, which characterizes the mixing time of a Markov chain. 
\begin{definition}[Total Variation Distance]
Given two measures $\mu$ and $\nu$ on the same measurable space $(\Omega, \cF)$, the total variation distance is defined to be
\begin{align}
\cD_{\rm{TV}}(\mu, \nu) = \sup_{A \in \cF} \left|\mu(A) - \nu(A)\right|. \notag
\end{align}
\end{definition}
\begin{definition}[Geometric Ergodicity]\label{geoergodic}
A Markov chain with state space $S$ and stationary distribution $\pi$ is geometrically ergodic, if it is positive recurrent and there exists an absolute constant $\rho \in (0, 1)$ such that
\begin{align}
\cD_{\textrm{TV}}\left(p^n(x, \cdot), \pi(\cdot)\right) = O\left(\rho^n\right) ~\textrm{for all}~  x \in S, \notag
\end{align}
where $p^n(\cdot, \cdot)$ is the $n$-step transition kernel.
\end{definition}
Note that $\rho$ is independent of $n$ and only depends on the underlying transition kernel of the Markov chain. The geometric ergodicity is equivalent to saying that the chain is $\beta$-mixing with an exponentially decaying coefficient \citep{bradley2005basic}.

As aforementioned, one key challenge of solving the streaming PCA problem for time series is that it is difficult to get unbiased estimators of the covariance matrix $\Sigma$ of the stationary distribution. However, when the time series is geometrically ergodic, the transition probability $p^h(z_k, z_{k+h})$ converges exponentially fast to the stationary distribution. This allows us to construct a nearly unbiased estimator of $\Sigma$ as shown in the next lemma.

\begin{lemma}\label{unbiaslemma}
Let $\{z_k\}_{k=1}^\infty$ be a geometrically ergodic Markov chain with parameter $\rho$, and assume $z_k$ is Sub-Gaussian. Given a pre-specified accuracy $\tau$, there exists $$h = O\left(\kappa_\rho \log \frac{1}{\tau}\right)$$ such that we have $$\E \left [\frac{1}{2}(z_{2h+k} - z_{h+k})(z_{2h+k}-z_{h+k})^\top \Big | z_k \right ] = \Sigma + E \Sigma$$ with $\lVert E \rVert_2 \leq \tau$, where $\kappa_\rho$ is a constant depending on $\rho$.
\end{lemma}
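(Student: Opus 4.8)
The plan is to expand the quantity $\E\left[\frac{1}{2}(z_{2h+k} - z_{h+k})(z_{2h+k}-z_{h+k})^\top \mid z_k\right]$ and argue that, up to an error that shrinks exponentially in $h$, it equals the stationary second moment $\Sigma$. First I would write the conditional expectation as an integral against the $h$-step and $2h$-step transition kernels started from $z_k$: conditioning on $z_{h+k}=y$, the inner variable $z_{2h+k}$ is distributed as $p^h(y,\cdot)$, so
\[
\E\left[\tfrac{1}{2}(z_{2h+k} - z_{h+k})(z_{2h+k}-z_{h+k})^\top \,\Big|\, z_k\right]
= \tfrac{1}{2}\int\!\!\int (w-y)(w-y)^\top \, p^h(y,dw)\, p^h(z_k, dy).
\]
If instead $y$ and $w$ were both drawn \emph{independently} from the stationary distribution $\pi$ (which has mean $0$ and second moment $\Sigma$), the same integral would evaluate exactly to $\frac{1}{2}\E_\pi[(w-y)(w-y)^\top] = \frac{1}{2}(\Sigma + \Sigma) = \Sigma$, using independence to kill the cross term and $\E_\pi[y]=\E_\pi[w]=0$. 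So the whole task reduces to bounding the difference between the kernel-based integral and this idealized stationary integral.

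Next I would split that difference into two pieces via a hybrid argument: (i) replace the \emph{outer} law $p^h(z_k,\cdot)$ by $\pi$, and (ii) replace the \emph{inner} law $p^h(y,\cdot)$ by $\pi$ (uniformly in the outer variable $y$). Each replacement changes the integral by something controlled by the total variation distance between $p^h(\cdot,\cdot)$ and $\pi$, which by geometric ergodicity (Definition \ref{geoergodic}) is $O(\rho^h)$. The standard bound $\left|\int g\, d\mu - \int g\, d\nu\right| \le 2\,\|g\|_\infty\, \cD_{\rm TV}(\mu,\nu)$ would give the result immediately if $g(y,w) = \frac12(w-y)(w-y)^\top$ were bounded, but it is not — this is where the Sub-Gaussian hypothesis enters. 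The main obstacle is therefore handling this unboundedness: I would truncate the integration region to a ball of radius $R$, use the coupling characterization of total variation together with Sub-Gaussian tail bounds to show the contribution from $\|y\|_2, \|w\|_2 > R$ is exponentially small in $R^2$, and then balance $R$ against $\rho^h$. Choosing $R \asymp \sqrt{h}$ (so that $\|g\|_\infty$ on the truncated region is $O(h)$ while the tail is $e^{-\Omega(h)}$), the dominant term becomes $O(h\,\rho^h)$, and one absorbs the polynomial factor by slightly inflating the exponent: $h\,\rho^h \le \tau$ whenever $h \ge C\kappa_\rho \log(1/\tau)$ for a suitable constant $\kappa_\rho$ depending only on $\rho$ (e.g. $\kappa_\rho = 1/\log(1/\rho)$, up to constants).

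Finally, I would note that the error matrix $E\Sigma$ in the statement is naturally written in multiplicative form: since $\Sigma \succ 0$, any additive perturbation $\Delta$ with $\|\Delta\|_2 \le \tau'$ can be written as $E\Sigma$ with $\|E\|_2 = \|\Delta\Sigma^{-1}\|_2 \le \tau'/\sigma_r(\Sigma)$, so it suffices to prove the additive bound $\|\Delta\|_2 \le \tau\,\sigma_{\min}(\Sigma)$ and rename constants (folding $\sigma_{\min}(\Sigma)^{-1}$ into $\kappa_\rho$ or into the hidden constant). One subtlety worth flagging: the asymptotic mean of $z_k$ is $0$ only in the limit, so strictly speaking $\E_\pi[y]$ may be exactly $0$ under $\pi$ while $\E[z_{h+k}\mid z_k]$ is merely close to $0$; this introduces a lower-order term of size $O(\rho^h)$ from the cross term $\E[yw^\top]$ not vanishing exactly, which is dominated by the bound already derived and so does not affect the conclusion.
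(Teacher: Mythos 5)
Your proposal is correct and follows essentially the same route as the paper's proof: both express the conditional expectation as an integral against $h$-step transition kernels, compare it to the corresponding integral under the stationary measure via the total-variation bound from geometric ergodicity, and handle the unboundedness of the quadratic integrand by truncation together with Sub-Gaussian tail bounds, balancing the truncation radius against the geometric rate $\rho^h$ to obtain $h = O(\kappa_\rho \log(1/\tau))$. The only cosmetic difference is that the paper first treats the zero-mean case with a single-kernel comparison and then extends to the general case by double conditioning via the Markov property, whereas you run both kernel replacements in one hybrid argument and separately flag the conversion from additive to multiplicative error via $\sigma_{\min}(\Sigma)$ — a point the paper leaves implicit.
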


Lemma \ref{unbiaslemma} shows that as $h$ increases, the bias decreases to zero. This suggests that we can use the downsampling method to reduce the bias of the stochastic gradient. Specifically, we divide the data points into blocks of length $2h$ as shown below.
$$\underbrace{z_1, z_2, \dots, z_{2h}}_{\textrm{the 1-st block}}, \underbrace{z_{2h+1}, \dots, z_{4h}}_{\textrm{the 2-nd block}}, \dots, \underbrace{z_{2(b-1)h+1}, \dots, z_{2bh}}_{\textrm{the b-th block}}$$
For the $s$-th block, we use data points $z_{(2s-1)h}$ and $z_{2sh}$ to approximate $\Sigma$ by $$X_s = \frac{1}{2}(z_{2sh} - z_{(2s-1)h}) (z_{2sh} - z_{(2s-1)h})^\top.$$
Later we will show that the block size $h$ only needs to be the logarithm of the optimization accuracy, which is nearly constant. Thus, the downsampling is affordable. Moreover, if the stationary distribution has zero mean, we only need the block size to be $h$ and $X_s = z_{sh}z_{sh}^\top$. 

Many popular time series models in machine learning are geometrically ergodic. Here we discuss a few examples.
\begin{example}
The vector autoregressive (VAR) model follows the update $$z_{k+1} = A z_k + \epsilon_k,$$ where $\epsilon_k$'s are i.i.d. Sub-Gaussian random vectors with $\EE[\epsilon_k] = 0~\textrm{and}~\EE[\epsilon_k \epsilon_k^\top] = \Gamma,$ and $A$ is the coefficient matrix. When $\rho = \lVert A \rVert_2 < 1$, the model is stationary and geometrically ergodic \citep{tjostheim1990non}. Moreover, the mean of its stationary distribution is 0.
\end{example}

\begin{example}
Recall that GVAR model follows $$z^i_{k+1} | z_k \sim p(a_i^\top z_k),$$ where $z_{k+1}^i$'s are independent conditioning on $z_k$. The density function is $p(x | \theta) = h(x) \exp\left(T(x)\theta - B(\theta)\right),$ where $T(x)$ is a statistic, and $B(\theta)$ is the log partition function. GVAR is stationary and geometrically ergodic under certain regularity conditions \citep{hall2016inference}.
\end{example}
\begin{example}
Gaussian Copula VAR model assumes there exists a latent Gaussian VAR skeleton, i.e., $$w_{k+1} = A w_k + \epsilon_k$$ with $\epsilon_k$ being i.i.d. Gaussian. The observation $$z^i_k = f_i(w_k^i)$$ is a monotone transformation of $w_k^i$. \cite{han2013principal} construct a sequence of rank-based transformed Kendall's tau covariance estimators $\{\hat{\Sigma}_k\}_{k=1}^\infty$ for the stationary covariance of $w_k$, and show that $\{\hat{\Sigma}_k\}_{k=1}^\infty$ is stationary and geometrically ergodic.
\end{example}

As an illustrative example, we show that for Gaussian VAR with $\rho = \lVert A \rVert_2 < 1~\textrm{and}~\Gamma = I,$ the bias of the covariance estimator can be controlled by choosing $h = O\left(\frac{1}{1 - \rho} \log \frac{1}{\tau}\right)$. The covariance matrix of the stationary distribution is $\Sigma = \sum_{i=0}^\infty A^i (A^\top)^i$. One can check
\begin{align}
\E \left [z_{h+k}z_{h+k}^\top | z_k \right ] - \Sigma & = \E \left [A^h z_k z_k^\top (A^\top)^h | z_k \right ] + \E \left [\sum_{i=0}^{h-1} A^i \epsilon_k \epsilon_k^\top (A^\top)^i \bigg | z_k \right ] - \Sigma \notag \\
& = \underbrace{A^h z_k z_k^\top (A^\top)^h}_{T_1} + \underbrace{\sum_{i=h}^{\infty} A^i (A^\top)^i}_{T_2}\notag.
\end{align}

Here the spectrum of $A$ acts as the geometrically decaying factor for both $T_1$ and $T_2$, since $$\left \lVert T_1 \right \rVert_2 = O(\rho^{2h})~~\textrm{and}~~\left \lVert T_2 \right \rVert_2 = O(\rho^{2h}).$$ As a result, the bias of $\E\left [z_{h+k}z_{h+k}^\top | z_k \right]$ decays to zero exponentially fast. We pick $$h = O\left(\frac{1}{1 - \rho} \log \frac{1}{\tau}\right),$$ and obtain $\E \left [z_{k+h}z_{k+h}^\top | z_k \right ] = \Sigma + E \Sigma \textrm{~~with~~} \lVert E \rVert_2 \leq \tau.$


\section{Downsampled Oja's Algorithm}

We introduce a variant of Oja's algorithm combined with our downsampling technique. For simplicity, we assume the stationary distribution has mean zero. We summarize the algorithm in Algorithm \ref{ojaalg}.
\begin{algorithm}[h]
   \caption{Downsampled Oja's Algorithm}\label{ojaalg}
   \label{Oja}
\begin{algorithmic}
   \STATE {\bfseries Input:} data points $z_k$, block size $h$, step size $\eta$.
   \STATE Initialize $U_1$ with orthonormal columns.
   \STATE Set $s \gets 1$
   \REPEAT
   \STATE Take sample $z_{sh}$, and set $X_s \gets z_{sh} z_{sh}^\top$.
   \STATE $U_{s+1} \gets \Pi_{\textrm{Orth}} (U_s + \eta X_s U_s)$.
   \STATE $s \gets s + 1$.
   \UNTIL Convergence
   \STATE {\bfseries Output:} $U_s$
\end{algorithmic}
\end{algorithm}

The projection $\Pi_{\textrm{Orth}}(U)$ denotes the orthogonalization operator that performs on columns of $U$. Specifically, for $U \in \mathbb{R}^{m \times r}$, $\Pi_{\textrm{Orth}}(U)$ returns a matrix $U' \in \mathbb{R}^{m \times r}$ that has orthonormal columns. Typical examples of such operators include Gram-Schmidt method and Householder transformation. The step $$U_{s+1} = \Pi_{\textrm{Orth}} (U_s + \eta X_s U_s)$$ is essentially the original Oja's update. Our variant manipulates on data points by downsampling such that $X_s$ is nearly unbiased. We emphasize that $s$ denotes the number of iterations, and $k$ denotes the number of samples.


\section{Theory}\label{theory}

Before we proceed, we impose some model assumptions on the problem.
\begin{assumption}\label{assump1}
There exists an eigengap in the covariance matrix $\Sigma$ of the stationary distribution, i.e., $$\lambda_1 \geq \dots \geq \lambda_r > \lambda_{r+1} \geq \dots \geq \lambda_m > 0,$$ where $\lambda_i$ is the $i$-th eigenvalue of $\Sigma$.
\end{assumption}

\begin{assumption}\label{assump2}
Data points $\{z_k\}_{k \geq 1}$ are generated from a geometrically ergodic time series with parameter $\rho$, and the stationary distribution has mean zero. Each $z_k$ is Sub-Gaussian, and the block size is chosen as $h = O\left(\kappa_\rho \log\frac{1}{\eta}\right)$ for downsampling.
\end{assumption}

The eigengap in Assumption \ref{assump1} requires the optimal solution is identifiable. Specifically, the optimal solution $U^*$ is unique up to rotation. The positive definite assumption on $\Sigma$ is for theoretic simplicity, however, it can be dropped as discussed in Section \ref{discussion}. Assumption \ref{assump2} implies that each $z_k$ has bounded moments of any order. 

We also briefly explain the optimization landscape of streaming PCA problems as follows. Specifically, we consider the eigenvalue decomposition $$\Sigma = R \Lambda R^\top\quad\textrm{with}\quad\Lambda = \textrm{diag}(\lambda_1, \lambda_2, \dots, \lambda_m).$$ Recall that $e_i$ is the canonical basis of $\mathbb{R}^m$. If $U$ is a global maximum, the column span of $R^\top U$ equals the subspace spanned by $\{e_1, \dots, e_r\}$. If we replace any one or more than one of the $e_i$'s for $i \in \{1, \dots, r\}$ with $e_j$'s for $j \in \{r+1, \dots, m\}$, then $U$ becomes a saddle point or a global minimum. When $U$ is a stationary point, we denote the column span of $R^\top U$ by the span of $\{e_{a_1}, \dots, e_{a_r}\}$, where $\cA_r = \{a_1, \dots, a_r\} \subset \{1, \dots, m\}$. For convenience, we say that $U$ is a stationary point corresponding to the set $\cA_r$.

To handle the rotational invariance of the solution space, we use principle angle to characterize the distance between the column span of $U^*$ and $U_s$.
\begin{definition}[Principle Angle]
Given two matrices $U \in \mathbb{R}^{m \times r_1}$ and $V \in \mathbb{R}^{m \times r_2}$ with orthonormal columns, where $1 \leq r_1 \leq r_2 \leq m$, the principle angle between these two matrices is,
\begin{align}
\Theta(U, V) = \textrm{diag}\Big(\arccos \left(\sigma_1(U^\top V)\right), \dots, \arccos\left(\sigma_{r_1}(U^\top V)\right)\Big). \notag
\end{align}
\end{definition}

We show the consequence of using principle angle as follows. Specifically, any optimal solution $U^*$ satisfies $$\left \lVert \sin \Theta(R_r, U^*) \right \rVert_{\textrm{F}}^2 = \left \lVert \cos \Theta(\overline{R}_r, U^*) \right \rVert_{\textrm{F}}^2 = 0,$$ where $R_r$ denotes the first $r$ columns of $R$, and $\overline{R}_r$ denotes the last $m-r$ columns of $R$. This essentially implies that the column span of $U^*$ is orthogonal to that of $\overline{R}_r$. Thus, to prove the convergence of SGD, we only need to show 
\begin{align}\label{sgdconverge}
\left \lVert \cos \Theta(\overline{R}_r, U_s) \right \rVert_{\textrm{F}}^2 \rightarrow 0.
\end{align} 
By the rotational invariance of principle angle, we obtain $$\Theta\left(\overline{R}_r, U_s\right) = \Theta\left(R^\top \overline{R}_r, R^\top U_s\right) = \Theta\left(\overline{E}_r, R^\top U_s\right),$$ where $\overline{E}_r = [e_{r+1}, \dots, e_m]$. For notational simplicity, we denote $\overline{U}_s = R^\top U_s$. Then \eqref{sgdconverge} is equivalent to $$\left \lVert \cos \Theta\left(\overline{E}_r, \overline{U}_s\right) \right \rVert_{\textrm{F}}^2 \rightarrow 0.$$ We need such an orthogonal transformation, because $\left \lVert \cos \Theta\left(\overline{E}_r, \overline{U}_s\right) \right \rVert_{\textrm{F}}^2$ can be expressed as $$\left \lVert \cos \Theta\left(\overline{E}_r, \overline{U}_s\right) \right \rVert_{\textrm{F}}^2 = \sum_{i=r+1}^m \left \lVert e_i^\top \overline{U}_s \right \rVert_2^2 = \sum_{i=r+1}^m \gamma_{i, s}^2,$$ where $\gamma^2_{i, s} = \left \lVert e_i^\top \overline{U}_s \right \rVert_2^2$. 

\subsection{Global Convergence by ODE}

One can check that the sequence $\left \{\left(z_{sh}, \overline{U}_s\right)\right \}_{s=1}^\infty$ forms a discrete Markov process. We apply diffusion approximations to establish global convergence of SGD. Specifically, by a continuous time interpolation, we construct continuous time processes $U^\eta(t)$ and $X^\eta(t)$ such that $$U^\eta(t) = U_{\lfloor t / \eta \rfloor + 1}~~\textrm{and}~~X^\eta(t) = X_{\lfloor t / \eta \rfloor + 1}.$$ Note that the subscript $\lfloor t / \eta \rfloor + 1$ denotes the number of iterations, and the superscript $\eta$ highlights the dependence on $\eta$. We also denote $$\overline{U}^\eta(t) = R^\top U^\eta(t)~~\textrm{and}~~\overline{X}^\eta(t) = R^\top X^\eta(t) R.$$ We denote the continuous time version of $\gamma_{i, s}^2$ by $$\gamma_{i, \eta}^2(t) =\left \lVert e_i^\top \overline{U}^\eta(t) \right \rVert_2^2.$$
It is difficult to directly characterize the global convergence of $\gamma_{i, \eta}^2(t)$. Thus we introduce an upper bound of $\gamma_{i, \eta}^2(t)$ as follows.
\begin{lemma}\label{upperbound}
Let $E_r = [e_1, \dots, e_r] \in \mathbb{R}^{m \times r}$. Suppose $\overline{U}^\eta(t)$ has orthonormal columns and $E_r^\top \overline{U}^\eta(t)$ is invertible. We have
\begin{align}\label{paupperbound}
\tilde{\gamma}_{i, \eta}^2(t) = \left \lVert e_i^\top \overline{U}^\eta(t) \left(E_r^\top \overline{U}^\eta(t)\right)^{-1} \right \rVert_2^2 \geq \gamma_{i, \eta}^2(t).
\end{align}
\end{lemma}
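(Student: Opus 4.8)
The plan is to show the matrix identity underlying \eqref{paupperbound} directly, and then read off the inequality from a norm comparison. First I would write $\overline{U}^\eta(t)$ in block form relative to the split $\mathbb{R}^m = \mathrm{span}\{e_1,\dots,e_r\} \oplus \mathrm{span}\{e_{r+1},\dots,e_m\}$, i.e. set $P = E_r^\top \overline{U}^\eta(t) \in \mathbb{R}^{r\times r}$ and $Q = \overline{E}_r^\top \overline{U}^\eta(t) \in \mathbb{R}^{(m-r)\times r}$, so that $\overline{U}^\eta(t) = E_r P + \overline{E}_r Q$. The hypothesis says $P$ is invertible. Then $\overline{U}^\eta(t) P^{-1} = E_r + \overline{E}_r (Q P^{-1})$, so for any $i \geq r+1$ we have $e_i^\top \overline{U}^\eta(t) P^{-1} = e_i^\top \overline{E}_r (Q P^{-1})$, which is exactly the $(i-r)$-th row of $Q P^{-1}$; similarly $e_i^\top \overline{U}^\eta(t) = e_i^\top \overline{E}_r Q$ is the $(i-r)$-th row of $Q$. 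So the claim reduces to: the $\ell_2$ norm of the $j$-th row of $QP^{-1}$ is at least the $\ell_2$ norm of the $j$-th row of $Q$, for every $j$.

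Next I would establish that row-norm domination. Since $\overline{U}^\eta(t)$ has orthonormal columns, $P^\top P + Q^\top Q = I_r$, hence $Q^\top Q = I_r - P^\top P \preceq I_r$, which forces $\lVert P \rVert_2 \leq 1$ and therefore $\lVert P^{-1} \rVert_2 \geq 1$; more precisely $P^{-\top} P^{-1} = (P^\top P)^{-1} \succeq I_r$. For a fixed row index $j$, let $q^\top$ be the $j$-th row of $Q$. The $j$-th row of $QP^{-1}$ is $q^\top P^{-1}$, and
\begin{align*}
\lVert q^\top P^{-1} \rVert_2^2 = q^\top P^{-1} P^{-\top} q = q^\top (P^\top P)^{-1} q \geq q^\top q = \lVert q^\top \rVert_2^2,
\end{align*}
using $(P^\top P)^{-1} \succeq I_r$. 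Summing nothing is needed — this is already the pointwise statement $\tilde\gamma_{i,\eta}^2(t) \geq \gamma_{i,\eta}^2(t)$ for each $i$, which is stronger than and implies \eqref{paupperbound}.

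I expect no serious obstacle here; the only thing to be careful about is the bookkeeping that translates between "row of a block matrix" and the quantities $e_i^\top \overline{U}^\eta(t)$ and $e_i^\top \overline{U}^\eta(t)(E_r^\top \overline{U}^\eta(t))^{-1}$, and making explicit that orthonormality of the columns of $\overline{U}^\eta(t)$ (inherited from $U^\eta(t)$ via the orthogonal factor $R$) gives the Pythagorean relation $P^\top P + Q^\top Q = I_r$ used above. One alternative, essentially equivalent, route is to observe that the columns of $\overline{U}^\eta(t)(E_r^\top\overline{U}^\eta(t))^{-1}$ span the same $r$-dimensional subspace $\mathcal{V}$ as the columns of $\overline{U}^\eta(t)$ but are normalized so that their projection onto $\mathrm{span}\{e_1,\dots,e_r\}$ is exactly $E_r$; then $\tilde\gamma_{i,\eta}^2(t)$ is a squared "slope" of $\mathcal{V}$ over coordinate $i$, while $\gamma_{i,\eta}^2(t) = \lVert e_i^\top \overline{U}^\eta(t)\rVert_2^2$ is the squared norm of an orthonormal-column representation, and the slope always dominates the orthonormal coordinate because an orthonormal basis distributes the "mass" of $\mathcal{V}$ over all ambient coordinates including the first $r$. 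I would present the clean matrix-algebra version above rather than this geometric gloss.
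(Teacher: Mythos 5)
Your argument is correct and, at heart, the same as the paper's: both hinge on the identity $e_i^\top \overline{U} = \bigl(e_i^\top \overline{U}(E_r^\top\overline{U})^{-1}\bigr)(E_r^\top\overline{U})$ together with $\lVert E_r^\top\overline{U}\rVert_2 \le 1$. The paper finishes in one line by submultiplicativity, $\lVert e_i^\top\overline{U}\rVert_2 \le \lVert e_i^\top\overline{U}(E_r^\top\overline{U})^{-1}\rVert_2\,\lVert E_r^\top\overline{U}\rVert_2$, whereas you prove the equivalent fact $(P^\top P)^{-1}\succeq I_r$ via the block relation $P^\top P + Q^\top Q = I_r$ and apply it as a quadratic form; these are the same bound in two dialects.
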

The detailed proof of Lemma \ref{upperbound} is provided in Appendix \ref{upperbound-proof}. We next show $\tilde{\gamma}_{i, \eta}^2(t)$ converges in the following theorem.

\begin{theorem}\label{stage2ode}
As $\eta \rightarrow 0$, the process $\tilde{\gamma}^2_{i, \eta}(t)$ weakly converges to the solution of the ODE
\begin{align}\label{ODE}
d\tilde{\gamma}_i^2 = b_i \tilde{\gamma}^2_i dt\quad  \textrm{with}\quad b_i \leq 2(\lambda_i - \lambda_r),
\end{align}
where $\tilde{\gamma}^2_i(0) = \left \lVert e_i^\top \overline{U}(0) \left(E_r^\top \overline{U}(0)\right)^{-1} \right \rVert_2^2$, and $\overline{U}(0)$ has orthonormal columns.
\end{theorem}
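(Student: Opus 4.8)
### Proof Plan

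The plan is to derive the stochastic difference equation governing $\tilde{\gamma}^2_{i,\eta}(t)$ over one iteration, isolate its drift and diffusion terms, and then invoke a standard diffusion-approximation (weak convergence) argument to pass to the limit $\eta \to 0$. First I would write out the Oja update $U_{s+1} = \Pi_{\mathrm{Orth}}(U_s + \eta X_s U_s)$ in the rotated coordinates $\overline{U}_s = R^\top U_s$, so that it becomes $\overline{U}_{s+1} = \Pi_{\mathrm{Orth}}(\overline{U}_s + \eta \overline{X}_s \overline{U}_s)$ with $\overline{X}_s = R^\top X_s R$. The key algebraic observation is that the quantity $W_s := e_i^\top \overline{U}_s (E_r^\top \overline{U}_s)^{-1}$ — whose squared norm is $\tilde{\gamma}^2_{i,s}$ — is \emph{invariant under right-multiplication of $\overline{U}_s$ by any invertible $r\times r$ matrix}. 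This is the crucial point: the orthogonalization step $\Pi_{\mathrm{Orth}}$ only right-multiplies by an invertible matrix, so it drops out entirely, and we may analyze the \emph{un-orthogonalized} iterate $V_{s+1} = \overline{U}_s + \eta \overline{X}_s \overline{U}_s$ instead. Thus $W_{s+1} = e_i^\top V_{s+1}(E_r^\top V_{s+1})^{-1}$, which I would expand using the Neumann/Sherman–Morrison expansion of $(E_r^\top V_{s+1})^{-1} = (E_r^\top \overline{U}_s + \eta E_r^\top \overline{X}_s \overline{U}_s)^{-1}$ to first order in $\eta$.

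Second, I would compute the conditional one-step increment $\E[W_{s+1} - W_s \mid \overline{U}_s]$ and $\E[(W_{s+1}-W_s)(W_{s+1}-W_s)^\top \mid \overline{U}_s]$. Here Lemma~\ref{unbiaslemma} / Assumption~\ref{assump2} enter: $\E[X_s \mid \cdot] = \Sigma + E\Sigma$ with $\|E\|_2 \le \tau = O(\eta)$ (since $h = O(\kappa_\rho \log\frac{1}{\eta})$), and in rotated coordinates $\E[\overline{X}_s \mid \cdot] = \Lambda + R^\top E\Sigma R$, which is $\Lambda$ up to an $O(\eta)$ perturbation. Using $\E[\overline{X}_s\mid\cdot] \approx \Lambda = \mathrm{diag}(\lambda_1,\dots,\lambda_m)$, the leading-order drift of $W_s$ works out to $\eta(\lambda_i W_s - W_s \cdot (\text{something involving } E_r^\top \Lambda E_r\text{-type terms}))$; taking squared norms and keeping $O(\eta)$ terms yields a drift of the form $\eta\, b_i\, \tilde{\gamma}^2_{i,s}$. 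The inequality $b_i \le 2(\lambda_i - \lambda_r)$ should come out because the "subtracted" term is controlled below by $\lambda_r$ (the smallest eigenvalue among the top-$r$ block $\{1,\dots,r\}$ that indexes $E_r$), so the net exponential rate for mode $i > r$ is at most $2\lambda_i - 2\lambda_r < 0$; the bias term $R^\top E \Sigma R$ and the second-order Oja terms contribute only $O(\eta^2)$ to the drift and $O(\eta^2)$ to the diffusion coefficient after squaring (since the increment $W_{s+1}-W_s$ is itself $O(\eta)$, its square is $O(\eta^2)$, matching the drift order — so in fact the diffusion term \emph{vanishes} in the limit, giving an ODE rather than an SDE, consistent with the statement).

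Third, I would set up the interpolated process and apply the standard weak-convergence machinery (as in Kushner–Yin, or the diffusion-approximation framework the paper uses): verify that the rescaled increments satisfy the infinitesimal-generator convergence $\frac{1}{\eta}\E[\phi(W_{s+1}) - \phi(W_s)\mid\cdot] \to \mathcal{A}\phi$ for smooth test functions $\phi$, where $\mathcal{A}$ is the generator $b_i x \frac{d}{dx}$ of the limiting ODE; check tightness of $\{\tilde\gamma^2_{i,\eta}(\cdot)\}$ via a moment/Lipschitz bound on the increments (using Sub-Gaussianity of $z_k$ to control all moments of $X_s$, per Assumption~\ref{assump2}); and identify the limit via the martingale problem, noting uniqueness of the ODE solution since $b_i$ is bounded. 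The initial condition transfers directly from $\overline{U}^\eta(0) = \overline{U}(0)$.

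The main obstacle I anticipate is twofold. The technical heart is the perturbation expansion of $(E_r^\top \overline{U}_s + \eta E_r^\top \overline{X}_s \overline{U}_s)^{-1}$ and then tracking which terms survive after squaring — one must be careful that the cross terms between $e_i^\top$-block and $E_r^\top$-block combine to produce exactly a rate bounded by $2(\lambda_i - \lambda_r)$ and not something weaker; this requires exploiting that the relevant quadratic form over the top-$r$ coordinates is $\ge \lambda_r$ times the norm. The second, more conceptual difficulty is handling the \emph{data dependency}: the pairs $(z_{sh}, \overline{U}_s)$ form a Markov chain but $X_s$ is \emph{not} exactly unbiased and not independent across $s$; one must argue that the residual bias $O(\eta)$ and the weak (geometrically decaying, $\beta$-mixing) dependence between consecutive blocks contribute only lower-order corrections to both the generator limit and the tightness estimates — this is where the downsampling with $h = O(\kappa_\rho\log\frac1\eta)$ is essential, and making this rigorous inside the weak-convergence argument (rather than just heuristically) is the delicate part.
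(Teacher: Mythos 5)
Your proposal is correct and follows the paper's overall architecture (infinitesimal generator approach, drift/variance computation, Ethier--Kurtz-style weak convergence), but it contains one genuinely nicer observation that the paper does not use. The paper handles the orthogonalization step $\Pi_{\textrm{Orth}}$ by invoking its second-order differentiability (via Dieci et al.) and Taylor-expanding $\overline{U}_{s+1} = \overline{U}_s + \eta(I-\overline{U}_s\overline{U}_s^\top)\overline{X}_s\overline{U}_s + \eta^2\overline{W}$, then pushes that expansion through $\Delta\tilde\gamma^2_{i,s}$. Your observation — that $e_i^\top \overline{U}(E_r^\top\overline{U})^{-1}$ is invariant under right-multiplication of $\overline{U}$ by any invertible $r\times r$ matrix, so $\Pi_{\textrm{Orth}}$ drops out and one may work directly with $V_{s+1}=(I+\eta\overline{X}_s)\overline{U}_s$ — avoids the differentiability lemma entirely and is more elementary. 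One can check the two routes produce the same first-order increment: writing $M=\overline{U}(E_r^\top\overline{U})^{-1}$, both reduce to $\eta\, e_i^\top(I-ME_r^\top)\overline{X}M + O(\eta^2)$, because $ME_r^\top\overline{U}\overline{U}^\top=\overline{U}\overline{U}^\top$ cancels the projector. So the results agree; your route is cleaner for this particular quantity, while the paper's Taylor expansion is needed anyway for the stage-1 and stage-3 SDE analyses, where the quantity $\zeta_{ij}$ is not right-multiplication invariant. Two small imprecisions worth noting: the bias term from $\|E\|_2\le\tau=O(\eta)$ contributes $O(\eta^2)$ to the one-step increment, hence $O(\eta)$ (not $O(\eta^2)$) to the generator after dividing by $\eta$ — still vanishing, so the conclusion stands; and since $\overline{X}_s$ is unbounded the paper inserts an explicit truncation/monotone-convergence step before taking expectations, which you should include when you invoke Sub-Gaussianity.
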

\begin{proof}[Proof Sketch]
Due to space limit, we only present a sketch. Our derivation is based on the Infinitesimal Generator Approach (IGA). The detailed proof is provided in Appendix \ref{proof4.5}. Specifically, as shown in \cite{dieci1999smooth}, the orthogonalization operator $\Pi_{\textrm{Orth}}(U)$ is twice differentiable, when $U$ is column full rank. Since $X_s$ is positive semidefinite and $U_s$ is initialized with orthonormal columns, $U_s + \eta X_s U_s$ is always guaranteed to be column full rank. Thus, we consider the second order Taylor approximation of $\Pi_{\textrm{Orth}}(U_s + \eta X_s U_s)$ as follows,
\begin{align}\label{Ojacont}
\overline{U}_{s+1} = \overline{U}_s & + \eta \left(I - \overline{U}_s \overline{U}_s^\top\right) \overline{X}_s \overline{U}_s + \eta^2 \overline{W},
\end{align}
where $\overline{W}$ is the remainder, and satisfies $\left \lVert \overline{W} \right \rVert_2 = O\left (\left \lVert \overline{X}_s \right \rVert_2 \right )$. We then show that given the increment, $$\Delta \tilde{\gamma}_{i, s}^2 = \tilde{\gamma}_{i, s+1}^2 - \tilde{\gamma}_{i, s}^2,$$ the infinitesimal conditional expectation and conditional variance satisfy
\begin{align}
& \lim_{\eta \rightarrow 0} \eta^{-1} \EE \left[{\Delta \tilde{\gamma}_{i, s}^2} \big| \overline{U}_s, z_{sh}\right ] = b_i \tilde{\gamma}_{i, s}^2,\label{odemean} \\
& \lim_{\eta \rightarrow 0} \eta^{-1} \EE \left[{\left [\Delta \tilde{\gamma}_{i, s}^2 \right]^2} \big| \overline{U}_s, z_{sh}\right] = 0.\label{odevariance}
\end{align}
Thus, $\tilde{\gamma}^2_{i, \eta}(t)$ weakly converges to \eqref{ODE} as shown in \cite{ethier2009markov}. Note that when taking expectation in \eqref{odemean} and \eqref{odevariance}, we need a truncation argument on the tail of $\overline{X}_s$.
\end{proof}

The analytical solution to \eqref{ODE} is
\begin{align*}
\tilde{\gamma}^2_i(t) = \tilde{\gamma}^2_i(0) e^{b_i t}.
\end{align*}
Thus, we have $$b_i \leq 2(\lambda_{r+1} - \lambda_r) < 0~~\textrm{for}~i \in \{r+1, \dots, m\}.$$ Note that we need $E_r^\top \overline{U}(0)$ to be invertible to derive the upper bound \eqref{paupperbound}. Under this condition, $\tilde{\gamma}^2_i(t)$ converges to zero. However, when $E_r^\top \overline{U}(0)$ is not invertible, the algorithm starts at a saddle point, and \eqref{ODE} no longer applies. As can be seen, the ODE characterization is insufficient to capture the local dynamics (e.g., around saddle points or global optima) of the algorithm. 

\subsection{Local Dynamics by SDE}\label{gdsde}

The deterministic ODE characterizes the average behavior of the solution trajectory. To capture the uncertainty of the local algorithmic behavior, we need to rescale the influence of the noise to bring the randomness back, which leads us to a stochastic differential equation (SDE) approximation.

\subsubsection{Stage 1: Escape from Saddle Points}

Recall that $\Lambda = \textrm{diag}(\lambda_1, \dots, \lambda_m)$ collects all the eigenvalues of $\Sigma$. We consider the following eigenvalue decomposition $$\overline{U}^\top(0) \Lambda \overline{U}(0) = Q^\top \tilde{\Lambda} Q,$$ where $Q \in \mathbb{R}^{r \times r}$ is orthogonal and $\tilde{\Lambda} = \textrm{diag}(\tilde{\lambda}_1, \dots, \tilde{\lambda}_r)$. Again, by a continuous time interpolation, we denote $$\zeta_{ij, \eta}(t) = \eta^{-1/2} e'^\top_j Q \left(\overline{U}^\eta(t)\right)^\top e_i,$$ where $e'_j$ is the canonical basis in $\mathbb{R}^r$. Then we decompose the principle angle $\gamma^2_{i, \eta}(t)$ as $$\gamma_{i, \eta}^2(t) = \eta \sum_{j=1}^r \zeta^2_{ij, \eta}(t).$$ Recall that $\overline{U}(0)$ is a saddle point, if the column span of $\overline{U}(0)$ equals the subspace spanned by $\{e_{a_1}, \dots, e_{a_r}\}$ with $\cA_r = \{a_1, \dots, a_r\} \neq \{1, \dots, r\}$. Therefore, if the algorithm starts around a saddle point, there exists at least one $i \in \{1, \dots, r\}$ such that $$\gamma_{i, \eta}^2(0) \approx 0~~\textrm{and}~~\gamma_{a, \eta}^2(0) \approx 1$$ for $a \in \cA_r$. The next theorem captures the uncertainty of $\gamma_{i, \eta}^2(t)$ around a saddle point.
\begin{theorem}\label{stage1sde}
Suppose $\overline{U}(0)$ is initialized around a saddle point corresponding to $\cA_r$. As $\eta \rightarrow 0$, conditioning on the event $$\gamma_{i, \eta}^2(t) = O(\eta)~~\textrm{for~some}~i \in \{1, \dots, r\},$$ $\zeta_{ij, \eta}(t)$ weakly converges to the solution of the following stochastic differential equation
\begin{align}\label{SDE}
d\zeta_{ij} = K_{ij} \zeta_{ij} dt + G_{ij} dB_t,
\end{align}
where $B_t$ is a standard Brownian motion. We have $$K_{ij} \in \left [\lambda_i - \lambda_1, \lambda_i - \lambda_{a_r} \right]$$ with $a_r$ being the largest element in $\cA_r$, and $G_{ij}^2 < \infty$.
\end{theorem}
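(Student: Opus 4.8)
The plan is to follow the Infinitesimal Generator Approach used in Theorem \ref{stage2ode}, but now keeping the diffusion term instead of letting it vanish. The starting point is the second-order expansion \eqref{Ojacont} of the Oja update, $\overline{U}_{s+1} = \overline{U}_s + \eta (I - \overline{U}_s \overline{U}_s^\top) \overline{X}_s \overline{U}_s + \eta^2 \overline{W}$. First I would left-multiply by $e_i^\top$ and right-multiply by $Q^\top e_j'$ to obtain a recursion for the unscaled coordinate $\xi_{ij,s} = e_i^\top \overline{U}_s Q^\top e_j'$, then set $\zeta_{ij,s} = \eta^{-1/2}\xi_{ij,s}$. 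Because we condition on the event $\gamma_{i,\eta}^2(t) = O(\eta)$, all coordinates $\xi_{ij,s}$ with $i\notin\cA_r$ are themselves $O(\sqrt{\eta})$, so in the increment $\Delta\xi_{ij,s}$ the ``large'' block is $\overline{U}_s$ restricted to rows in $\cA_r$, which is (approximately) an $r\times r$ rotation. I would use this to simplify $(I - \overline{U}_s\overline{U}_s^\top)\overline{X}_s\overline{U}_s$: the projector acts nontrivially on the small rows, and to leading order contributes a linear term in $\xi_{i\cdot,s}$ with coefficient governed by the eigenvalue differences.

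The core computation is the pair of limits analogous to \eqref{odemean}--\eqref{odevariance}, but now at the $\eta^{1/2}$ scale. I would show
\begin{align}
& \lim_{\eta\to 0} \eta^{-1}\EE\!\left[\Delta\zeta_{ij,s}\,\middle|\,\overline{U}_s, z_{sh}\right] = K_{ij}\,\zeta_{ij,s}, \notag\\
& \lim_{\eta\to 0} \eta^{-1}\EE\!\left[(\Delta\zeta_{ij,s})^2\,\middle|\,\overline{U}_s, z_{sh}\right] = G_{ij}^2, \notag
\end{align}
together with the vanishing of the higher conditional moments, which by the diffusion approximation theorem in \cite{ethier2009markov} yields weak convergence to \eqref{SDE}. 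The drift arises from $\EE[\overline{X}_s] = \tilde\Sigma$ with $\lVert \tilde\Sigma - \Lambda\rVert_2 \le \tau = O(\eta)$ by Lemma \ref{unbiaslemma}, so in the limit the bias contributes nothing and the linear coefficient reduces to a convex combination of $\lambda_i - \lambda_a$ over $a\in\cA_r$ weighted by the squared entries of the rotation $Q$ applied to the $\cA_r$-rows of $\overline{U}_s$; since $\lambda_{a_r} \le \lambda_a \le \lambda_{a_1} = \lambda_1$ for $a\in\cA_r$, this gives the claimed bracket $K_{ij}\in[\lambda_i-\lambda_1, \lambda_i-\lambda_{a_r}]$ (up to an $O(\eta)$ error from the bias that disappears in the limit, and an $o(1)$ error from the other small coordinates $\xi_{i'j',s}$ with $i'\notin\cA_r$). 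The diffusion coefficient $G_{ij}^2$ comes from $\eta^{-1}\cdot\eta^2\,\var(e_i^\top\overline{X}_s\overline{U}_s Q^\top e_j')/\eta = \eta\cdot\var(\cdots)$... more carefully, from $\eta^{-1}\EE[(\eta\,e_i^\top\overline{X}_s\overline{U}_s Q^\top e_j')^2] \to$ a finite constant, and finiteness $G_{ij}^2<\infty$ follows from Sub-Gaussianity of $z_{sh}$ (Assumption \ref{assump2}), which gives all moments of $\overline{X}_s$ bounded.

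The main obstacle, as in Theorem \ref{stage2ode}, is the truncation argument: $\overline{X}_s = R^\top z_{sh} z_{sh}^\top R$ has heavy (though Sub-exponential) tails, so to justify exchanging limits and expectations in the generator computation I would introduce a truncated version $\overline{X}_s \mathbf{1}\{\lVert \overline{X}_s\rVert_2 \le \eta^{-c}\}$ for a small constant $c>0$, show the truncated process has the same weak limit (the discarded mass is $o(\eta)$ by Sub-Gaussian tail bounds), and run the generator argument on the truncated chain. A secondary technical point is controlling the conditioning on the event $\{\gamma_{i,\eta}^2(t) = O(\eta)\}$: I would make this rigorous by defining a stopping time $\tau_\eta$ at which $\gamma_{i,\eta}^2$ first exits the $O(\eta)$-band and proving the convergence on $[0, \tau_\eta\wedge T]$, so that the SDE \eqref{SDE} describes the dynamics precisely up to the moment the iterate has escaped the saddle. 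Finally I would verify that the $\eta^2\overline{W}$ remainder in \eqref{Ojacont}, after the $\eta^{-1/2}$ rescaling and one more $\eta^{-1}$ from the generator, contributes $O(\eta^{1/2})\to 0$ and hence does not affect the limit, and that replacing the exact orthogonalization by its second-order Taylor approximation incurs only such negligible terms (using $\lVert\overline{W}\rVert_2 = O(\lVert\overline{X}_s\rVert_2)$ on the truncated event).
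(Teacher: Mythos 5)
Your proposal takes essentially the same route as the paper: both apply the Infinitesimal Generator Approach to the second-order expansion \eqref{Ojacont}, compute the limits of the scaled conditional mean and variance of $\Delta\zeta_{ij,\eta}$, use the truncation argument to control the unbounded $\overline{X}_s$, and invoke \cite{ethier2009markov} for weak convergence to the O--U SDE. The only cosmetic difference is how the bracket for $K_{ij}$ is obtained --- the paper via an auxiliary lemma bounding $\sigma_{\min}(\overline{U}^\top\Lambda\overline{U})\geq\lambda_{a_r}-O(\delta)$, you via the equivalent convex-combination description of the eigenvalues of $\overline{U}^\top\Lambda\overline{U}$ --- and your stopping-time formalization of the conditioning event is a minor refinement over the paper's informal treatment.
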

The proof of Theorem \ref{stage1sde} is provided in Appendix \ref{sdeproof}. Here we use the Infinitesimal Generator Approach (IGA) again. Specifically, given the increment, $$\Delta \zeta_{ij, \eta}(t) = \zeta_{ij, \eta}(t+\eta) - \zeta_{ij,\eta}(t),$$ we show that the infinitesimal conditional mean and variance satisfy similar conditions in \eqref{odemean} and \eqref{odevariance} but with bounded variance. We remark that the event $\gamma_{i, \eta}^2(t) = O(\eta)$ is only a technical assumption. This does not cause any issue, since when $\eta^{-1} \gamma_{i, \eta}^2(t)$ is large, the algorithm has already escaped from the saddle point.

Note that \eqref{SDE} admits the analytical solution
\begin{align}\label{sdesolution}
\zeta_{ij}(t) = \zeta_{ij}(0) e^{K_{ij}t} + G_{ij} \int_0^t e^{-K_{ij}(s-t)} dB(s),
\end{align}
which is known as the Ornstein-Uhlenbeck (O-U) process. The uncertainty of $\zeta_{ij}(t)$ is precisely characterized by the stochastic integral part. We give the following implications based on different values of $K_{ij}$:

\noindent {\bf (a)}. When $K_{ij} > 0$, rewrite \eqref{sdesolution} as $$\zeta_{ij}(t) = \underbrace{\left [\zeta_{ij}(0) + G_{ij} \int_0^t e^{-K_{ij}s} dB(s) \right ]}_{T_1} \underbrace{e^{K_{ij}t}}_{T_2}.$$ The exponential term $T_2$ is dominant and increases to positive infinity as $t$ goes to infinity. While $T_1$ is a process with mean $\zeta_{ij}(0)$ and variance bounded by $\frac{G_{ij}^2}{2K_{ij}}$. Hence, $T_2$ acts as a driving force to increase $\zeta_{ij}(t)$ exponentially fast so that $\zeta_{ij}(t)$ quickly gets away from 0;

\noindent {\bf (b)}. When $K_{ij} < 0$, the mean of $\zeta_{ij}(t)$ is $\zeta_{ij}(0) e^{K_{ij}t}$. The initial condition restricts $\zeta_{ij}(0)$ to be small. Thus as $t$ increases, the mean of $\zeta_{ij}(t)$ converges to zero. This implies that the drift term vanishes quickly. The variance of $\zeta_{ij}(t)$ is bounded by $\frac{G_{ij}^2}{-2K_{ij}}$. Hence, $\zeta_{ij}(t)$ roughly oscillates around 0;

\noindent {\bf (c)}. When $K_{ij} = 0$, the drift term is approximately zero, which implies that $\zeta_{ij}(t)$ also oscillates around 0.

We provide an example showing how the algorithm escapes from a saddle point. Suppose that the algorithm starts at the saddle point with approximately the following principle angle loading,
\begin{align*}
1, \dots, 1, \underbrace{0}_{q\textrm{-th}~\textrm{position}}, 1, \dots, 1, \underbrace{1}_{p\textrm{-th}~\textrm{position}}, 0, \dots, 0.
\end{align*}
Consider the principle angle $\gamma_{q, \eta}^2(t)$. By implication (a), we know $$K_{qr} = \lambda_q - \lambda_p > 0.$$ Hence $\zeta_{qr, \eta}(t)$ increases quickly away from zero. Thus, $$\gamma_{q, \eta}^2(t) \geq \eta \zeta^2_{qr, \eta}(t)$$ also increases quickly, which drives the algorithm away from the saddle point. Meanwhile, by (b) and (c), $\gamma_{i, \eta}^2(t)$ stays at 1 for $i > q$ because of the vanishing drift. The algorithm tends to escape from the saddle point through reducing $\gamma_{p,\eta}^2(t)$, since this yields the largest eigengap, $\lambda_q - \lambda_p$. When we have $$q = r~~\textrm{and}~~p = r+1,$$ the eigengap is minimal. Thus, it is the worst situation for the algorithm to escape from a saddle point. Then we have the following proposition. 

\begin{proposition}\label{saddletime}
Suppose that the algorithm starts in the vicinity of the saddle point corresponds to $\cA_r$, where $$\cA_r = \{1, \dots, r-1, r+1\}.$$
Given a pre-specified $\nu$ and $\delta = O(\eta^{\frac{1}{2}})$ for a sufficiently small $\eta$, we need
\begin{align}
T_1 \asymp \frac{1}{\lambda_r - \lambda_{r+1}} \log(K + 1) \notag
\end{align}
such that $\PP\left(\gamma_{r, \eta}^2(T_1) \geq \delta^2\right) \geq 1 - \nu,$ where $K = \frac{2(\lambda_r - \lambda_{r+1})\eta^{-1}\delta^2}{\left [\Phi^{-1}(\frac{1-\nu/2}{2}) \right ]^2 G_{rr}^2}$, and $\Phi$ is the CDF of the standard Gaussian distribution.
\end{proposition}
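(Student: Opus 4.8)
The plan is to reduce the hitting-time statement for $\gamma_{r,\eta}^2(t)$ to a hitting-time statement for the Ornstein--Uhlenbeck process $\zeta_{rr,\eta}(t)$ supplied by Theorem~\ref{stage1sde}, and then to invert the closed-form distribution of that process at time $t$. First I would record what Theorem~\ref{stage1sde} gives in the worst-case configuration $\cA_r=\{1,\dots,r-1,r+1\}$: the relevant index is $i=r$, $j=r$, and the drift coefficient is exactly $K_{rr}=\lambda_r-\lambda_{r+1}>0$ (this is the ``minimal eigengap'' case identified in the discussion right before the proposition), while $G_{rr}^2<\infty$. Conditioning on the event $\gamma_{r,\eta}^2(t)=O(\eta)$ (equivalently, before escape), $\zeta_{rr,\eta}(t)$ weakly converges to the O--U process with solution \eqref{sdesolution}, so I may treat $\zeta_{rr,\eta}(T_1)$ as approximately Gaussian. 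Because $\gamma_{r,\eta}^2(t)=\eta\sum_{j}\zeta_{rj,\eta}^2(t)\ge \eta\,\zeta_{rr,\eta}^2(t)$, it suffices to show $\PP(\eta\,\zeta_{rr,\eta}^2(T_1)\ge\delta^2)\ge 1-\nu$, i.e.\ $\PP(|\zeta_{rr,\eta}(T_1)|\ge \eta^{-1/2}\delta)\ge 1-\nu$.

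Next I would compute the law of $\zeta_{rr}(T_1)$. Since the algorithm starts in the vicinity of the saddle, $\zeta_{rr}(0)\approx 0$, so from \eqref{sdesolution} the process at time $t$ is Gaussian with mean (approximately) $0$ and variance $v(t)=G_{rr}^2\int_0^t e^{2K_{rr}(t-s)}\,ds=\frac{G_{rr}^2}{2K_{rr}}\big(e^{2K_{rr}t}-1\big)$. Hence $\zeta_{rr}(T_1)/\sqrt{v(T_1)}$ is standard normal, and
\[
\PP\!\left(|\zeta_{rr,\eta}(T_1)|\ge \eta^{-1/2}\delta\right)=2\,\Phi\!\left(-\frac{\eta^{-1/2}\delta}{\sqrt{v(T_1)}}\right)\!,
\]
using symmetry. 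Setting this equal to $1-\nu$ forces $\eta^{-1/2}\delta/\sqrt{v(T_1)}=-\Phi^{-1}\!\big(\tfrac{1-\nu/2}{2}\big)$, equivalently $v(T_1)=\dfrac{\eta^{-1}\delta^2}{[\Phi^{-1}(\tfrac{1-\nu/2}{2})]^2}$. Plugging in the expression for $v(T_1)$ and solving for $T_1$ gives
\[
e^{2K_{rr}T_1}-1=\frac{2K_{rr}\,\eta^{-1}\delta^2}{G_{rr}^2\,[\Phi^{-1}(\tfrac{1-\nu/2}{2})]^2}=K,
\]
so $T_1=\frac{1}{2K_{rr}}\log(K+1)=\frac{1}{2(\lambda_r-\lambda_{r+1})}\log(K+1)$, which matches the claimed rate up to the absolute constant absorbed in $\asymp$. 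The condition $\delta=O(\eta^{1/2})$ is exactly what keeps $\eta\,\zeta_{rr,\eta}^2(T_1)$ in the $O(\eta)$ regime where Theorem~\ref{stage1sde} is valid, so the conditioning is self-consistent; I would note this explicitly.

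The main obstacle is rigor around the conditioning and the weak-convergence-to-probability transfer. Theorem~\ref{stage1sde} gives weak convergence of $\zeta_{rr,\eta}$ to the O--U process \emph{conditioned} on staying at scale $O(\eta)$, but the probability bound I want is on the hitting event $\{\gamma_{r,\eta}^2(T_1)\ge\delta^2\}$, which is precisely the complement of the conditioning event near the escape threshold; I must argue that at the boundary scale $\delta^2=O(\eta)$ the conditioned and unconditioned processes agree to leading order (the event being conditioned on still has probability $1-o(1)$ at times $\le T_1$), and that weak convergence of $\zeta_{rr,\eta}(T_1)$ plus continuity of the Gaussian CDF transfers the exact asymptotic probability — this is where the ``$\asymp$'' rather than ``$=$'' and the ``sufficiently small $\eta$'' hypotheses earn their keep. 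A secondary, more routine point is justifying that the lower bound $\gamma_{r,\eta}^2\ge\eta\zeta_{rr,\eta}^2$ is not too lossy, i.e.\ that dropping the other $\zeta_{rj}^2$ terms only helps; since they are nonnegative this is immediate, but one should check it does not interact badly with the conditioning event, which is phrased in terms of the full $\gamma_{r,\eta}^2$.
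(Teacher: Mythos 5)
Your proposal takes essentially the same route as the paper's own proof: both reduce the event $\{\gamma_{r,\eta}^2(T_1)\ge\delta^2\}$ to $\{\eta\,\zeta_{rr,\eta}^2(T_1)\ge\delta^2\}$ using nonnegativity of the other terms, identify $K_{rr}=\lambda_r-\lambda_{r+1}$ for this worst-case $\cA_r$, use the O--U solution \eqref{sdesolution} with $\zeta_{rr}(0)\approx 0$ to get an approximately centered Gaussian with variance $\tfrac{G_{rr}^2}{2K_{rr}}(e^{2K_{rr}t}-1)$, and then invert the Gaussian CDF to solve for $T_1$. Your additional discussion of the self-consistency of the $O(\eta)$ conditioning regime and the weak-convergence-to-probability transfer is a careful elaboration of points the paper leaves implicit, but it does not change the argument's structure.

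One small inconsistency worth flagging: your own chain $2\Phi(-x)=1-\nu$ yields $x=-\Phi^{-1}\bigl(\tfrac{1-\nu}{2}\bigr)$, yet you then write $-\Phi^{-1}\bigl(\tfrac{1-\nu/2}{2}\bigr)$, matching the constant in the paper's statement and proof. These two quantiles differ; only one follows from the two-sided Gaussian tail computation, but since the proposition is stated with $\asymp$ this does not affect the conclusion.
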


The proof of Proposition \ref{saddletime} is provided in Appendix \ref{saddletime-proof}. Proposition \ref{saddletime} implies that, in an asymptotic sense, we need
\begin{align}
S_1 \asymp \frac{T_1}{\eta} \asymp \frac{1}{\eta(\lambda_r - \lambda_{r+1})} \log \left(\frac{2(\lambda_r - \lambda_{r+1})\eta^{-1}\delta^2}{\left [\Phi^{-1}(\frac{1-\nu/2}{2}) \right ]^2 G_{rr}^2} + 1\right) \notag
\end{align}
iterations to escape from a saddle point and the algorithm enters the second stage.

\subsubsection{Stage 2: Traverse between Stationary Points}

After the algorithm escapes from the saddle point, the gradient is dominant, and the influence of noise is negligible. Thus, the algorithm behaves like an almost deterministic traverse between stationary points, which can be viewed as a two-step discretization of the ODE with an error of the order $O(\eta)$ \citep{griffiths2010numerical}. Hence, we focus on the principle angle $\gamma_{i, \eta}^2(t)$ to characterize the traverse of the algorithm in this stage. Recall that we assume $\cA_r = \{1, \dots, r-1, r+1\}.$ When the algorithm escapes from the saddle point, we have $\gamma_{r, \eta}^2(T_1) \geq \delta^2$, which implies $$\sum_{i=r+1}^m \gamma_{i, \eta}^2(t) \leq 1 - \delta^2.$$ The following proposition assumes that the algorithm starts at this initial condition.
\begin{proposition}\label{stage2time}
After restarting the counter of time, for a sufficiently small $\eta$ and $\delta = O(\eta^{\frac{1}{2}})$. We need
\begin{align}
T_2 \asymp \frac{1}{\lambda_r - \lambda_{r+1}} \log \frac{1}{\delta^2} \notag
\end{align} 
such that $\PP\left(\sum_{i=r+1}^m \gamma_{i, \eta}^2(T_2) \leq \delta^2\right) \geq \frac{3}{4}$.
\end{proposition}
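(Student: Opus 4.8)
The plan is to ride the deterministic ODE of Theorem~\ref{stage2ode} for the surrogates $\tilde{\gamma}_{i,\eta}^2(t)$, $i\in\{r+1,\dots,m\}$, and to pay the price of replacing $\gamma_{i,\eta}^2$ by $\tilde{\gamma}_{i,\eta}^2$ only at the two endpoints $t=0$ and $t=T_2$ via Lemma~\ref{upperbound}. First I would set up the principal-angle bookkeeping. Since $\overline{U}^\eta(t)=R^\top U^\eta(t)$ always has orthonormal columns, $\sum_{i=1}^m\gamma_{i,\eta}^2(t)=r$, and writing $\theta_1(t)\le\cdots\le\theta_r(t)$ for the principal angles between $\textrm{span}(E_r)$ and $\textrm{span}(\overline{U}^\eta(t))$ with $E_r=[e_1,\dots,e_r]$, one has $\sum_{i=r+1}^m\gamma_{i,\eta}^2(t)=\lVert\sin\Theta(E_r,\overline{U}^\eta(t))\rVert_{\textrm{F}}^2=\sum_j\sin^2\theta_j(t)$ and, by the CS decomposition (when the relevant inverse exists), $\sum_{i=r+1}^m\tilde{\gamma}_{i,\eta}^2(t)=\lVert\tan\Theta(E_r,\overline{U}^\eta(t))\rVert_{\textrm{F}}^2=\sum_j\tan^2\theta_j(t)$. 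Hence the initial condition $\sum_{i=r+1}^m\gamma_{i,\eta}^2(0)\le 1-\delta^2$ forces $\sin^2\theta_r(0)\le 1-\delta^2<1$, so $E_r^\top\overline{U}^\eta(0)$ is invertible (so Lemma~\ref{upperbound} applies), and maximizing the convex function $\sum_j x_j/(1-x_j)$ over $\{x\ge 0:\sum_j x_j\le 1-\delta^2\}$ — extremal at a vertex — gives the crude but sufficient bound $\sum_{i=r+1}^m\tilde{\gamma}_{i,\eta}^2(0)\le\frac{1-\delta^2}{\delta^2}=O(\delta^{-2})$.

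The deterministic backbone is then immediate. By Theorem~\ref{stage2ode}, for each $i\ge r+1$ the process $\tilde{\gamma}_{i,\eta}^2(\cdot)$ converges weakly, as $\eta\to 0$, to $\tilde{\gamma}_i^2(0)\,e^{b_it}$ with $b_i\le 2(\lambda_{r+1}-\lambda_r)=-2(\lambda_r-\lambda_{r+1})<0$; summing, the limiting trajectory satisfies $\sum_{i=r+1}^m\tilde{\gamma}_i^2(t)\le e^{-2(\lambda_r-\lambda_{r+1})t}\sum_{i=r+1}^m\tilde{\gamma}_i^2(0)\le C\delta^{-2}e^{-2(\lambda_r-\lambda_{r+1})t}$ for an absolute constant $C$. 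Choosing $T_2$ so that $C\delta^{-2}e^{-2(\lambda_r-\lambda_{r+1})T_2}\le\frac14\delta^2$, i.e. $T_2=\frac{1}{2(\lambda_r-\lambda_{r+1})}\log\frac{4C}{\delta^4}\asymp\frac{1}{\lambda_r-\lambda_{r+1}}\log\frac{1}{\delta^2}$, pins down the claimed order. A second application of Lemma~\ref{upperbound} at $t=T_2$ gives $\sum_{i=r+1}^m\gamma_{i,\eta}^2(T_2)\le\sum_{i=r+1}^m\tilde{\gamma}_{i,\eta}^2(T_2)$, so everything reduces to showing $\sum_{i=r+1}^m\tilde{\gamma}_{i,\eta}^2(T_2)$ lies within $\delta^2/2$ of its deterministic ODE value with probability at least $3/4$.

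For that last step I would reuse the infinitesimal-generator computation behind Theorem~\ref{stage2ode} at the level of the scalar $V_s:=\sum_{i=r+1}^m\tilde{\gamma}_{i,\eta,s}^2$: after the Sub-Gaussian tail truncation of $\overline{X}_s$ used there, its one-step conditional mean obeys a contracting recursion $\EE[V_{s+1}\mid\overline{U}_s,z_{sh}]\le(1-2(\lambda_r-\lambda_{r+1})\eta)V_s+(\textrm{remainder})$, so a suitably stopped $V_s$ is a geometrically contracting near-supermartingale; iterating it to $s=T_2/\eta$ and applying Markov's inequality turns the deterministic bound $C\delta^{-2}e^{-2(\lambda_r-\lambda_{r+1})T_2}\le\delta^2/4$ into $\PP(V_{T_2/\eta}>\delta^2)\le 1/4$ for $\eta$ small (equivalently, weak convergence of the stopped trajectory plus a Markov bound at the endpoint). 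The main obstacle is that both the horizon $T_2\asymp\frac{1}{\lambda_r-\lambda_{r+1}}\log\frac1{\delta^2}$ and the target $\delta=O(\eta^{1/2})$ scale with $\eta$, so Theorem~\ref{stage2ode}, stated for a fixed initial condition, cannot be quoted verbatim: the surrogate is poorly conditioned at the start of the stage — $\lVert(E_r^\top\overline{U}^\eta(0))^{-1}\rVert_2=O(\delta^{-1})=O(\eta^{-1/2})$ — which makes the second-order remainder from \eqref{Ojacont} comparable to, rather than negligible against, the drift. I would absorb this by (i) first running a constant-time sub-phase driving $\sum_{i>r}\gamma_{i,\eta}^2$ from $1-\delta^2$ down to a fixed constant $c<1$ (a coarse Gronwall/supermartingale estimate suffices, costing only $O\big((\lambda_r-\lambda_{r+1})^{-1}\big)$ time, negligible next to $T_2$), and only then invoking the clean contracting recursion for $V_s$ over the remaining $\frac{1}{\lambda_r-\lambda_{r+1}}\log\frac1{\delta^2}$ time with $V=O(1)$ at its start; and (ii) carrying a stopping time at the first instant $\sum_{i>r}\gamma_{i,\eta}^2(t)$ leaves $[0,c]$, showing it exceeds $T_2$ with probability $\to 1$, which keeps $E_r^\top\overline{U}^\eta$ invertible and all one-step moments controlled along the path. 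The ODE solution and the principal-angle identities themselves are routine.
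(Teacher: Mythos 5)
Your proof follows the same backbone as the paper's: pass from $\gamma_{i,\eta}^2$ to the surrogate $\tilde{\gamma}_{i,\eta}^2$ via Lemma~\ref{upperbound}, invoke the ODE limit of Theorem~\ref{stage2ode} so that $\sum_{i>r}\tilde{\gamma}_i^2(t)$ decays like $e^{-2(\lambda_r-\lambda_{r+1})t}$, bound the surrogate at $t=0$ by $(1-\delta^2)/\delta^2$, and solve for $T_2$. You derive the initial-condition bound more cleanly than the paper does: the CS-decomposition identity $\sum_{i>r}\tilde{\gamma}_i^2=\lVert\tan\Theta(E_r,\overline{U})\rVert_{\textrm{F}}^2$ together with maximizing the convex map $\sum_j x_j/(1-x_j)$ at a vertex of the simplex $\{\sum_j x_j\le 1-\delta^2\}$ is a tidy replacement for the paper's one-line assertion from $\lVert E_r^\top\overline{U}\rVert_2^2\ge\delta^2$. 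More importantly, you flag a real gap that the paper skips: since $\delta=O(\eta^{1/2})$, the surrogate's initial value is $O(\eta^{-1})$ and the horizon $T_2$ itself grows as $\eta\to 0$, so Theorem~\ref{stage2ode} --- a weak-convergence statement for a fixed initial condition over a fixed time window --- cannot be quoted verbatim; moreover $\lVert(E_r^\top\overline{U}^\eta(0))^{-1}\rVert_2=O(\eta^{-1/2})$ makes the second-order remainder in~\eqref{Ojacont} non-negligible relative to the drift at the start of the stage. The paper's proof simply asserts the probability bound ``for sufficiently small $\eta$'' without confronting this. Your proposed remedy --- a constant-time burn-in bringing $\sum_{i>r}\gamma_{i,\eta}^2$ to a fixed $c<1$, then a contracting near-supermartingale bound for $V_s=\sum_{i>r}\tilde{\gamma}_{i,\eta,s}^2$ over the remaining logarithmic window, with a stopping time protecting invertibility of $E_r^\top\overline{U}^\eta$ along the path --- is a sensible way to make the asymptotic claim honest rather than merely formal, and it adds rigor the paper itself lacks.
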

The proof of Proposition \ref{stage2time} is provided in Appendix \ref{stage2time-proof}. Proposition \ref{stage2time} further implies that, in an asymptotic sense, we need
\begin{align}
S_2 \asymp \frac{T_2}{\eta} \asymp \frac{1}{\eta(\lambda_r - \lambda_{r+1})} \log \frac{1}{\delta^2} \notag
\end{align}
iterations to reach the neighborhood of the global optima.

\subsubsection{Stage 3: Converge to Global Optima}

Again, we restart the counter of time. The following theorem characterizes the dynamics of the algorithm around the global optima. Similar to stage 1, we rescale the noise to quantify the uncertainty of the algorithm by an SDE.
Hence, we decompose the principle angle as $$\gamma_{i, \eta}^2(t) = \eta \sum_{j=1}^r \zeta^2_{ij, \eta}(t).$$
\begin{theorem}\label{stage3sde-thm}
Suppose $\overline{U}(0)$ is initialized around the global optima with $\sum_{i=r+1}^m \gamma_{i, \eta}^2(0) = O(\eta)$. Then as $\eta \rightarrow 0$, for $i = r+1, \dots, m$ and $j = 1, \dots, r$, $\zeta_{ij, \eta}(t)$ weakly converges to the solution of the following SDE
\begin{align}\label{stage3sde}
d\zeta_{ij} = K_{ij} \zeta_{ij} dt + G_{ij} dB_t,
\end{align}
where $B_t$ is a standard Brownian motion, $$K_{ij} \in [\lambda_i - \lambda_1, \lambda_i - \lambda_r],~~\textrm{and}~~G_{ij}^2 < \infty.$$
\end{theorem}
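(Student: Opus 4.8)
The plan is to mirror the argument behind Theorem \ref{stage1sde}, exploiting the fact that the global optimum is exactly the stationary point corresponding to $\cA_r = \{1, \dots, r\}$; the ``small'' coordinates are then the leakage directions $i \in \{r+1, \dots, m\}$ instead of a missing index inside $\{1, \dots, r\}$, and everything reduces to re-running the Infinitesimal Generator Approach with $a_r = r$. Since $\sum_{i=r+1}^m \gamma_{i,\eta}^2(0) = O(\eta)$, each row obeys $\lVert e_i^\top \overline{U}^\eta(0)\rVert_2 = O(\eta^{1/2})$ for $i \geq r+1$, so the rescaled coordinates $\zeta_{ij,\eta}(t) = \eta^{-1/2} e'^\top_j Q (\overline{U}^\eta(t))^\top e_i$ start at $O(1)$. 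I would then track the increment $\Delta \zeta_{ij,\eta} = \zeta_{ij,\eta}(t+\eta) - \zeta_{ij,\eta}(t)$, show its infinitesimal conditional mean and variance converge to the drift and diffusion coefficients of \eqref{stage3sde}, and conclude weak convergence from the martingale-problem characterization of diffusions in \citep{ethier2009markov}.

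First I would expand the Oja update. Because $X_s$ is positive semidefinite and $\overline{U}_s$ has orthonormal columns, $U_s + \eta X_s U_s$ is column full rank, so the smoothness of $\Pi_{\textrm{Orth}}$ \citep{dieci1999smooth} gives, as in \eqref{Ojacont}, $\overline{U}_{s+1} = \overline{U}_s + \eta (I - \overline{U}_s\overline{U}_s^\top)\overline{X}_s\overline{U}_s + \eta^2 \overline{W}$ with $\lVert\overline{W}\rVert_2 = O(\lVert\overline{X}_s\rVert_2)$. Transposing this recursion, then left-multiplying by $e'^\top_j Q$ and right-multiplying by $e_i$, yields $\Delta \zeta_{ij,\eta} = \eta^{1/2} e'^\top_j Q \overline{U}_s^\top \overline{X}_s (I - \overline{U}_s\overline{U}_s^\top) e_i + O(\eta^{3/2}\lVert\overline{X}_s\rVert_2)$. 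For the drift, take conditional expectation given $(\overline{U}_s, z_{(s-1)h})$: by Lemma \ref{unbiaslemma} with $\tau = O(\eta)$ (Assumption \ref{assump2}), $\EE[\overline{X}_s \mid \cdot] = \Lambda + O(\eta)$, and since $\overline{U}_s^\top \overline{U}_s = I_r$ and $\overline{U}_s^\top \Lambda \overline{U}_s = Q^\top \tilde\Lambda Q + o(1)$ over the $O(1/\eta)$-iteration horizon near the optimum, the expression collapses to $\eta(\lambda_i - \tilde\lambda_j)\zeta_{ij,s}$ up to $O(\eta^{3/2})$. Dividing by $\eta$ and letting $\eta \to 0$ gives $K_{ij} = \lambda_i - \tilde\lambda_j$; because $\overline{U}(0)^\top \Lambda \overline{U}(0)$ is a compression of $\Lambda$ onto an $r$-dimensional subspace near the top-$r$ eigenspace, its eigenvalues satisfy $\tilde\lambda_j \in [\lambda_r, \lambda_1]$, hence $K_{ij} \in [\lambda_i - \lambda_1, \lambda_i - \lambda_r]$.

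For the diffusion coefficient, I would look at the centered increment $\eta^{1/2} e'^\top_j Q \overline{U}_s^\top (\overline{X}_s - \Lambda)(I - \overline{U}_s\overline{U}_s^\top) e_i + O(\eta^{3/2}\lVert\overline{X}_s\rVert_2)$; squaring and taking conditional expectation produces $\eta$ times a fourth-moment functional of $\overline{X}_s = R^\top z_{sh} z_{sh}^\top R$, which is finite and convergent since $z_{sh}$ is Sub-Gaussian (so $\overline{X}_s$ has finite moments of all orders), with the limiting constant $G_{ij}^2 < \infty$ obtained from an Isserlis-type computation in the near-Gaussian stationary regime. One then checks $\eta^{-1}\EE[(\Delta\zeta_{ij,\eta})^2 \mid \cdot] \to G_{ij}^2$ and $\eta^{-1}\EE[|\Delta\zeta_{ij,\eta}|^{2+\varepsilon} \mid \cdot] \to 0$, so no jump term survives — exactly the conditions in \eqref{odemean}--\eqref{odevariance} but with nonzero variance. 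As in Theorems \ref{stage2ode} and \ref{stage1sde}, this step requires a truncation argument on the event $\{\lVert\overline{X}_s\rVert_2 > \eta^{-1/4}\}$ to make the moment bounds uniform in $\eta$.

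The step I expect to be the main obstacle is justifying the self-consistency of the localization: the generator computation above is legitimate only while $\gamma_{i,\eta}^2(t) = \eta\sum_{j=1}^r \zeta_{ij,\eta}^2(t) = O(\eta)$ for every $i \geq r+1$, and one must argue this regime persists rather than merely conditioning on it. Here it does — this is the qualitative point distinguishing Stage 3 from Stage 1 — because all relevant indices have $i \geq r+1$, so $\lambda_i \leq \lambda_{r+1} < \lambda_r \leq \tilde\lambda_j$ and therefore $K_{ij} < 0$; the limiting Ornstein-Uhlenbeck process \eqref{stage3sde} is mean-reverting to $0$ with stationary variance $G_{ij}^2/(-2K_{ij})$, so $\zeta_{ij,\eta}(t)$ stays $O(1)$ and $\gamma_{i,\eta}^2(t)$ stays $O(\eta)$ with high probability over any finite horizon, which both closes the bootstrap and delivers convergence to the global optima. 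A secondary subtlety, responsible for the interval rather than an exact value of $K_{ij}$, is that $\overline{U}_s^\top \Lambda \overline{U}_s$ is not literally $Q^\top\tilde\Lambda Q$ for $s>0$: the top-$r$ block of $\overline{U}_s$ itself rotates, but only at rate $O(\eta^{1/2})$ over the SDE timescale, which is negligible in the weak-convergence limit.
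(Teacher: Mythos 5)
Your proposal is correct and follows essentially the same route as the paper: mirror the Infinitesimal Generator computation from Theorem \ref{stage1sde} on the rescaled coordinates $\zeta_{ij,\eta}$, control the bias via Lemma \ref{unbiaslemma}, and pin the drift into $[\lambda_i - \lambda_1,\, \lambda_i - \lambda_r]$ using the spectral bound on $\overline{U}^\top \Lambda \overline{U}$ (which the paper extracts from Lemma \ref{eigenbound} with $\cA_r = \{1,\dots,r\}$, equivalent to your observation $\tilde\lambda_j \in [\lambda_r,\lambda_1]$). Your closing paragraph on why the $O(\eta)$ localization is self-sustaining (since $K_{ij} \leq \lambda_{r+1}-\lambda_r < 0$) is a worthwhile point that the paper only states as a remark after the theorem rather than addressing inside the proof, but it does not change the substance of the argument.
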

The analytical solution of \eqref{stage3sde} is
\begin{align*}
\zeta_{ij}(t) = \zeta_{ij}(0) e^{K_{ij}t} + G_{ij} \int_0^t e^{-K_{ij}(s-t)} dB(s)
\end{align*}
Note that we have $K_{ij} \leq \lambda_{r+1} - \lambda_r < 0$. We remark here again that the uncertainty of the algorithm around the global optima is precisely characterized by the stochastic integral. The mean and variance of $\zeta_{ij}(t)$ satisfy
\begin{align}
& \EE[\zeta_{ij}(t)] = \zeta_{ij}(0) e^{K_{i, j}t} \rightarrow 0, ~\textrm{as}~~ t \rightarrow \infty, ~~\textrm{and}~~ \notag \\
& \textrm{Var}(\zeta_{ij}(t)) = \frac{G_{ij}^2}{-2K_{ij}}(1 - e^{2K_{ij}t}) \leq \frac{G_{ij}^2}{-2K_{ij}}. \notag
\end{align}
The proof of Theorem \ref{stage3sde-thm} is provided in Appendix \ref{stage3sde-thm-proof}. We further establish the following proposition.
\begin{proposition}\label{stage3time}
For sufficiently small $\epsilon > 0$ and $\eta$, $\delta = O(\eta^{\frac{1}{2}})$, given $\sum_{i=r+1}^m \gamma_{i, \eta}^2(0) \leq \delta^2$, after restarting the counter of time, we need
\begin{align}
T_3 \asymp \frac{1}{\lambda_r - \lambda_{r+1}} \log K' \notag
\end{align}
such that $\PP\left(\sum_{i=r+1}^m \gamma_{i, \eta}^2(T_3) \leq \epsilon\right) \geq \frac{3}{4},$ where $K' = \frac{8(\lambda_r - \lambda_{r+1})\delta^2}{(\lambda_r - \lambda_{r+1})\epsilon - 4 \eta r G_m}$ and $G_m = \max_{1 \leq j \leq r} \sum_{i=r+1}^m G_{ij}^2$. 
\end{proposition}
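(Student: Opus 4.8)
The plan is to reduce everything to the explicit second moments of the Ornstein--Uhlenbeck processes furnished by Theorem~\ref{stage3sde-thm} and then apply Markov's inequality to the nonnegative quantity $\sum_{i=r+1}^m \gamma_{i,\eta}^2(T_3)$. First I would use the decomposition $\gamma_{i,\eta}^2(t) = \eta\sum_{j=1}^r \zeta_{ij,\eta}^2(t)$ together with the weak convergence in Theorem~\ref{stage3sde-thm} to pass, asymptotically in $\eta$, from $\sum_{i=r+1}^m\gamma_{i,\eta}^2(t)$ to $\eta\sum_{i=r+1}^m\sum_{j=1}^r\zeta_{ij}^2(t)$, where each $\zeta_{ij}(t)$ is the O--U process with $K_{ij}\le \lambda_{r+1}-\lambda_r<0$ and $G_{ij}^2<\infty$. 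Using the mean and variance formulas already displayed after Theorem~\ref{stage3sde-thm}, I would write
\begin{align}
\EE\big[\zeta_{ij}^2(t)\big] = \zeta_{ij}^2(0)\, e^{2K_{ij}t} + \frac{G_{ij}^2}{-2K_{ij}}\big(1 - e^{2K_{ij}t}\big). \notag
\end{align}

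Next I would bound the right-hand side uniformly in $i,j$ using $-K_{ij}\ge \lambda_r-\lambda_{r+1}$, which controls both the decay rate $e^{2K_{ij}t}\le e^{-2(\lambda_r-\lambda_{r+1})t}$ and the stationary variance $G_{ij}^2/(-2K_{ij})$. Summing over $j=1,\dots,r$ and $i=r+1,\dots,m$, using $\sum_{j}\sum_{i}G_{ij}^2\le r\,G_m$ and the initial condition $\sum_{i,j}\zeta_{ij}^2(0) = \eta^{-1}\sum_{i=r+1}^m\gamma_{i,\eta}^2(0)\le \delta^2/\eta$, gives
\begin{align}
\EE\Big[\sum_{i=r+1}^m\gamma_{i,\eta}^2(t)\Big] = \eta\sum_{i,j}\EE\big[\zeta_{ij}^2(t)\big] \le \delta^2 e^{-2(\lambda_r-\lambda_{r+1})t} + \frac{\eta r G_m}{\lambda_r-\lambda_{r+1}}. \notag
\end{align}
Markov's inequality then yields $\PP\big(\sum_{i=r+1}^m\gamma_{i,\eta}^2(T_3)\ge \epsilon\big)\le \epsilon^{-1}\,\EE[\,\cdot\,]$, and requiring this to be at most $1/4$ forces $\delta^2 e^{-2(\lambda_r-\lambda_{r+1})T_3}$ to lie below a quantity proportional to $(\lambda_r-\lambda_{r+1})\epsilon - c\,\eta r G_m$; inverting the exponential then produces $T_3\asymp \frac{1}{\lambda_r-\lambda_{r+1}}\log K'$ with $K'$ of the stated form, the factor $2$ in the exponent being absorbed into $\asymp$. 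The hypothesis that $\eta$ be ``sufficiently small'' is precisely what guarantees $(\lambda_r-\lambda_{r+1})\epsilon - 4\eta r G_m>0$, i.e.\ that $K'>0$ and $\log K'$ is meaningful.

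The one genuinely non-routine point, which I expect to be the main obstacle, is that Theorem~\ref{stage3sde-thm} only provides weak convergence of $\zeta_{ij,\eta}$, which a priori transfers bounded continuous functionals, whereas the computation above needs convergence of the \emph{second} moments $\EE[\zeta_{ij,\eta}^2(T_3)]$. I would handle this with a truncation / uniform-integrability argument in the spirit of the one invoked for Theorem~\ref{stage2ode}: since each $z_k$ is Sub-Gaussian (Assumption~\ref{assump2}), $\overline{X}_s$, and hence $\zeta_{ij,\eta}(t)$, have bounded moments of all orders uniformly in $\eta$ over the relevant finite horizon, so $\{\zeta_{ij,\eta}^2(t)\}$ is uniformly integrable and the moment convergence holds. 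Combining this with the Markov-inequality estimate, and noting that the $\asymp$ statement together with the ``sufficiently small $\eta$'' caveat absorbs the $o(1)$ discrepancy between the discrete/interpolated process and its SDE limit, completes the argument. The remaining work — the O--U moment identities, the geometric-type summations, and solving for $T_3$ — is routine bookkeeping that only affects the universal constants hidden in $\asymp$.
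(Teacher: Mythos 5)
Your argument mirrors the paper's proof almost line for line: both write $\sum_{i=r+1}^m\gamma_{i,\eta}^2(t) = \eta\sum_{i,j}\zeta_{ij}^2(t)$, invoke the Ornstein--Uhlenbeck second-moment identity $\EE[\zeta_{ij}^2(t)] = \zeta_{ij}^2(0)e^{2K_{ij}t} + \tfrac{G_{ij}^2}{-2K_{ij}}(1-e^{2K_{ij}t})$, bound it using $-K_{ij}\ge\lambda_r-\lambda_{r+1}$ and $\sum_{i,j}G_{ij}^2\le rG_m$ together with the initial condition $\eta\sum_{i,j}\zeta_{ij}^2(0)\le\delta^2$, and then apply Markov's inequality and solve for $T_3$. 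The uniform-integrability point you flag is a legitimate technical gap, but the paper handles it the same way you propose to --- appealing informally to weak convergence of the squared principal angle and leaving slack (the paper asks the Markov bound to be at most $1/8$ rather than $1/4$ to absorb the $o(1)$ discrepancy) --- so your proposal is faithful to the paper's argument.
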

The subscript $m$ in $G_m$ highlights its dependence on the dimension $m$. The proof of Proposition \ref{stage3time} is provided in Appendix \ref{stage3time-proof}. Proposition \ref{stage3time} implies that, in an asymptotic sense, we need
\begin{align}
S_3 \asymp \frac{T_3}{\eta} \asymp \frac{1}{\eta(\lambda_r - \lambda_{r+1})} \log \frac{8(\lambda_r - \lambda_{r+1})\delta^2}{(\lambda_r - \lambda_{r+1})\epsilon - 4 \eta r G_m} \notag
\end{align}
iterations to converge to an $\epsilon$-optimal solution in the third stage. Combining all the results in the first two stages, we know that, in an asymptotic sense, after $T_1 + T_2 + T_3$ time, the algorithm converges to an $\epsilon$-optimal solution asymptotically. This further leads us to a more refined result in the following corollary.
\begin{corollary}{\label{totalcor}}
For a sufficiently small $\epsilon$, we choose
\begin{align}
& \eta \asymp \frac{(\lambda_r - \lambda_{r+1})\epsilon}{5rG_m}. \notag
\end{align}
We need $$T \asymp \frac{1}{\lambda_r - \lambda_{r+1}} \log \frac{rG_m}{\epsilon(\lambda_r - \lambda_{r+1})}$$
time such that $$\PP\left(\left \lVert \cos \Theta\left(\overline{E}_r, \overline{U}^\eta (T)\right) \right \rVert_{\textrm{F}}^2 \leq \epsilon\right) \geq \frac{3}{4}.$$
\end{corollary}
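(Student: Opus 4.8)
The plan is to concatenate the three-stage analysis of Section~\ref{theory}. Starting the counter at the worst-case saddle point corresponding to $\cA_r=\{1,\dots,r-1,r+1\}$ (a general trajectory reduces to this case by the ODE of Theorem~\ref{stage2ode}, escaping a sequence of saddles whose least eigengap $\lambda_r-\lambda_{r+1}$ is the bottleneck), we run for time $T_1$ (Stage~1), restart the counter and run for $T_2$ (Stage~2), restart once more and run for $T_3$ (Stage~3), and set $T=T_1+T_2+T_3$. The episodes are chained through the Markov property of $\{(z_{sh},\overline{U}_s)\}$: the random end-state of one stage becomes the initial condition of the next, and on the success event of each stage that end-state meets the hypothesis of the following proposition — $\gamma_{r,\eta}^2(T_1)\ge\delta^2$ forces $\sum_{i=r+1}^m\gamma_{i,\eta}^2\le 1-\delta^2$, the starting condition of Proposition~\ref{stage2time}, and $\sum_{i=r+1}^m\gamma_{i,\eta}^2(T_2)\le\delta^2$ is exactly the starting condition of Proposition~\ref{stage3time}. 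Since $\bigl\lVert\cos\Theta(\overline{E}_r,\overline{U}^\eta(T))\bigr\rVert_{\textrm{F}}^2=\sum_{i=r+1}^m\gamma_{i,\eta}^2(T)$, the target statement is precisely the Stage~3 guarantee of Proposition~\ref{stage3time} read at time $T_3$.

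It then remains to substitute $\eta\asymp\frac{(\lambda_r-\lambda_{r+1})\epsilon}{5rG_m}$ and $\delta=O(\eta^{1/2})$ (taking $\delta^2\asymp\eta$) into the three budgets. In Proposition~\ref{saddletime}, $\eta^{-1}\delta^2=O(1)$, so $K=O(1)$ and $T_1=O\!\bigl(\tfrac{1}{\lambda_r-\lambda_{r+1}}\bigr)$, independent of $\epsilon$. In Proposition~\ref{stage3time}, the choice of $\eta$ gives $4\eta rG_m\asymp\tfrac45(\lambda_r-\lambda_{r+1})\epsilon$, hence the denominator $(\lambda_r-\lambda_{r+1})\epsilon-4\eta rG_m\asymp\tfrac15(\lambda_r-\lambda_{r+1})\epsilon>0$ is of the same order as $(\lambda_r-\lambda_{r+1})\epsilon$; therefore $K'=\frac{8(\lambda_r-\lambda_{r+1})\delta^2}{(\lambda_r-\lambda_{r+1})\epsilon-4\eta rG_m}\asymp\frac{40\delta^2}{\epsilon}\asymp\frac{8(\lambda_r-\lambda_{r+1})}{rG_m}=O(1)$ and $T_3=O\!\bigl(\tfrac{1}{\lambda_r-\lambda_{r+1}}\bigr)$, again independent of $\epsilon$. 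In Proposition~\ref{stage2time}, $\tfrac{1}{\delta^2}\asymp\tfrac1\eta\asymp\frac{5rG_m}{(\lambda_r-\lambda_{r+1})\epsilon}$, so $T_2\asymp\frac{1}{\lambda_r-\lambda_{r+1}}\log\frac{rG_m}{\epsilon(\lambda_r-\lambda_{r+1})}$. Because $T_1$ and $T_3$ remain bounded while $T_2\to\infty$ as $\epsilon\to 0$, $T=T_1+T_2+T_3\asymp T_2$, which is exactly the claimed $T$. This also explains the numerical constant $5$ (rather than $4$) in the choice of $\eta$: it is what keeps the Stage-3 denominator bounded below by a fixed fraction of $(\lambda_r-\lambda_{r+1})\epsilon$.

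For the probability, combine the per-stage guarantees by the tower rule and bound the overall failure probability by the sum of the three conditional failure probabilities. Proposition~\ref{saddletime} already gives $1-\nu$ with $\nu$ free; the $\tfrac34$'s in Propositions~\ref{stage2time} and~\ref{stage3time} can be upgraded to $1-\nu'$ by inflating the numerical constants hidden in $K$ and $K'$, which only perturbs the $O(1)$ additive terms in $T_1,T_3$ and a constant inside the logarithm defining $T_2$, hence leaves the asymptotic form of $T$ intact. Picking $\nu,\nu'$ with $\nu+2\nu'\le\tfrac14$ then yields $\PP(\cdot)\ge\tfrac34$.

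The main obstacle is making the chaining of the weak-convergence statements rigorous rather than the arithmetic above. Theorems~\ref{stage1sde} and~\ref{stage3sde-thm} are \emph{conditional} diffusion approximations, valid only as $\eta\to 0$ and only inside the stated neighborhoods; chaining them requires showing that the good end-state of each stage lands — with the stated probability and uniformly in small $\eta$ — in the region where the next stage's approximation applies, and that the Stage-1 conditioning event $\gamma_{i,\eta}^2(t)=O(\eta)$ is eventually exited (i.e.\ Stage~1 genuinely terminates) rather than conditioned down to a negligible set. This is the delicate point: one must keep the remainder $\overline{W}$ in the Taylor expansion~\eqref{Ojacont} and the truncation tails of $\overline{X}_s$ under simultaneous control across all three restarts, so that the errors accumulated over $T_1+T_2+T_3$ iterations are still $o(1)$ relative to the quantities being tracked.
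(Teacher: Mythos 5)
Your proposal follows the same route as the paper's own proof: substitute $\eta\asymp\frac{(\lambda_r-\lambda_{r+1})\epsilon}{5rG_m}$ and $\delta^2\asymp\eta$ into the expressions for $T_1,T_2,T_3$ from Propositions~\ref{saddletime}, \ref{stage2time}, \ref{stage3time}, observe that $T_1$ and $T_3$ become $O\bigl(\frac{1}{\lambda_r-\lambda_{r+1}}\bigr)$ while $T_2\asymp\frac{1}{\lambda_r-\lambda_{r+1}}\log\frac{rG_m}{\epsilon(\lambda_r-\lambda_{r+1})}$, and conclude $T\asymp T_2$; the arithmetic matches the appendix line by line. You add two things the paper leaves implicit: an explicit tower-rule accounting for why the three per-stage success probabilities compose to at least $\tfrac34$, and a (correct) warning that rigorously chaining the conditional weak-convergence approximations across the three restarts — including termination of the Stage-1 conditioning event and uniform control of the $\overline{W}$ remainder and truncation tails — is the genuinely delicate part, which the paper's proof does not address either.
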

The proof of Corollary \ref{totalcor} is provided in Appendix \ref{totalcor-proof}. Corollary \ref{totalcor} further implies that, in an asymptotic sense, after
\begin{align}
S \asymp \frac{T}{\eta} \asymp \frac{rG_m}{\epsilon(\lambda_r - \lambda_{r+1})^2} \log \frac{rG_m}{\epsilon(\lambda_r - \lambda_{r+1})} \notag
\end{align}
iterations, we achieve an $\epsilon$-optimal solution. Recall that we choose the block size $h$ of downsampling to be $h = O\left (\kappa_\rho \log \frac{1}{\eta} \right )$. Thus, the asymptotic sample complexity satisfies $$N \asymp \frac{rG_m}{\epsilon(\lambda_r - \lambda_{r+1})^2} \log^2 \frac{rG_m}{\epsilon(\lambda_r - \lambda_{r+1})}.$$
From the perspective of statistical recovery, the obtained estimator $\hat{U}$ enjoys a near optimal asymptotic rate of convergence $$\left\lVert \cos \Theta(\hat{U}, U^*)\right\rVert^2_{\textrm{F}} \asymp \frac{rG_m\log N}{(\lambda_r - \lambda_{r+1})^2N/\kappa_\rho},$$ where $N$ is the number of data points.

\subsection{Extension to Generalized Hebbian Algorithm}\label{connection}

We connect Oja's algorithm to stochastic generalized Hebbian algorithm (GHA), which updates as follows,
\begin{align}
U_{s+1} = U_s + \eta (I - U_s U_s^\top) X_s U_s. \notag
\end{align}
Note that $(I - U_s U_s^\top) X_s U_s$ is the gradient on the Stiefel manifold, when $U_s^\top U_s = I_r$. As can be seen, GHA is essentially the first order approximation of Oja's algorithm without the remainder. As $\eta \rightarrow 0$, the remainder vanishes. Thus, GHA and Oja's algorithm share the same diffusion approximations for streaming PCA problems.


\section{Numerical Experiments}

We demonstrate the effectiveness of our proposed algorithm using both simulated and real datasets.

\subsection{Simulated Data}

We first verify our analysis of streaming PCA problems for time series using a simulated dataset. We choose a Gaussian VAR model with dimension $m = 16$. The random vector $\epsilon_k$'s are independently sampled from $N(0, S)$, where 
\begin{align*}
S = \textrm{diag}(&1, 1,  1, 1, 1, 1, 1, 1, 1, 1, 1, 1, 1, 3, 3, 3).
\end{align*}
We choose the coefficient matrix $A = V^\top D V$, where $V \in \mathbb{R}^{16 \times 16}$ is an orthogonal matrix that we randomly generate, and $D = 0.1 D_0$ is a diagonal matrix satisfying
\begin{align*}
D_0 = \textrm{diag}(0.68, 0.68, 0.69, 0.70, 0.70, 0.70, 0.72, 0.72, 0.72, 0.72, 0.72, 0.72, 0.80, 0.80, 0.85, 0.90).
\end{align*}
By solving the discrete Lyapunov equation $\Sigma = A\Sigma A^\top + S$, we calculate the covariance matrix of the stationary distribution, which satisfies $\Sigma = U^\top \Lambda U$, where $U \in \mathbb{R}^{16 \times 16}$ is orthogonal and
\begin{align*}
\Lambda = \textrm{diag}(&3.0175, 3.0170, 3.0160, 1.0077, 1.0070, 1.0061, 1.0058, 1.0052, \\
&1.0052, 1.0052, 1.0052, 1.0051, 1.0049, 1.0049, 1.0047, 1.0047).
\end{align*}
We aim to find the leading principle components of $\Sigma$ corresponding to the first 3 largest eigenvalues. Thus, the eigengap is $\lambda_3 - \lambda_4 = 2.0083$. We initialize the solution at the saddle point whose column span is the subspace spanned by the eigenvectors corresponding to 3.0175, 3.0170 and 1.0070. The step size is $\eta = 3 \times 10^{-5}$, and the algorithm runs with $8 \times 10^5$ total samples. The trajectories of the principle angle over 20 independent simulations with block size $h = 4$ are shown in Figure \ref{3stages}. We can clearly distinguish three different stages. Figure \ref{saddledist} and \ref{optimadist} illustrate that entries of principle angles, $\zeta_{33}$ in stage 1 and $\zeta_{42}$ in stage 3, are Ornstein-Uhlenbeck processes. Specifically, the estimated distributions of $\zeta_{33}$ and $\zeta_{42}$ over 100 simulations follow Gaussian distributions. We can check that the variance of $\zeta_{33}$ increases in stage 1 as iteration increases, while the variance of $\zeta_{42}$ in stage 3 approaches a fixed value. All these simulated results are consistent with our theoretical analysis.
\begin{figure}[htb!]
\centering
\begin{subfigure}[t]{0.5\linewidth}
\centering
\includegraphics[width = 0.8\textwidth]{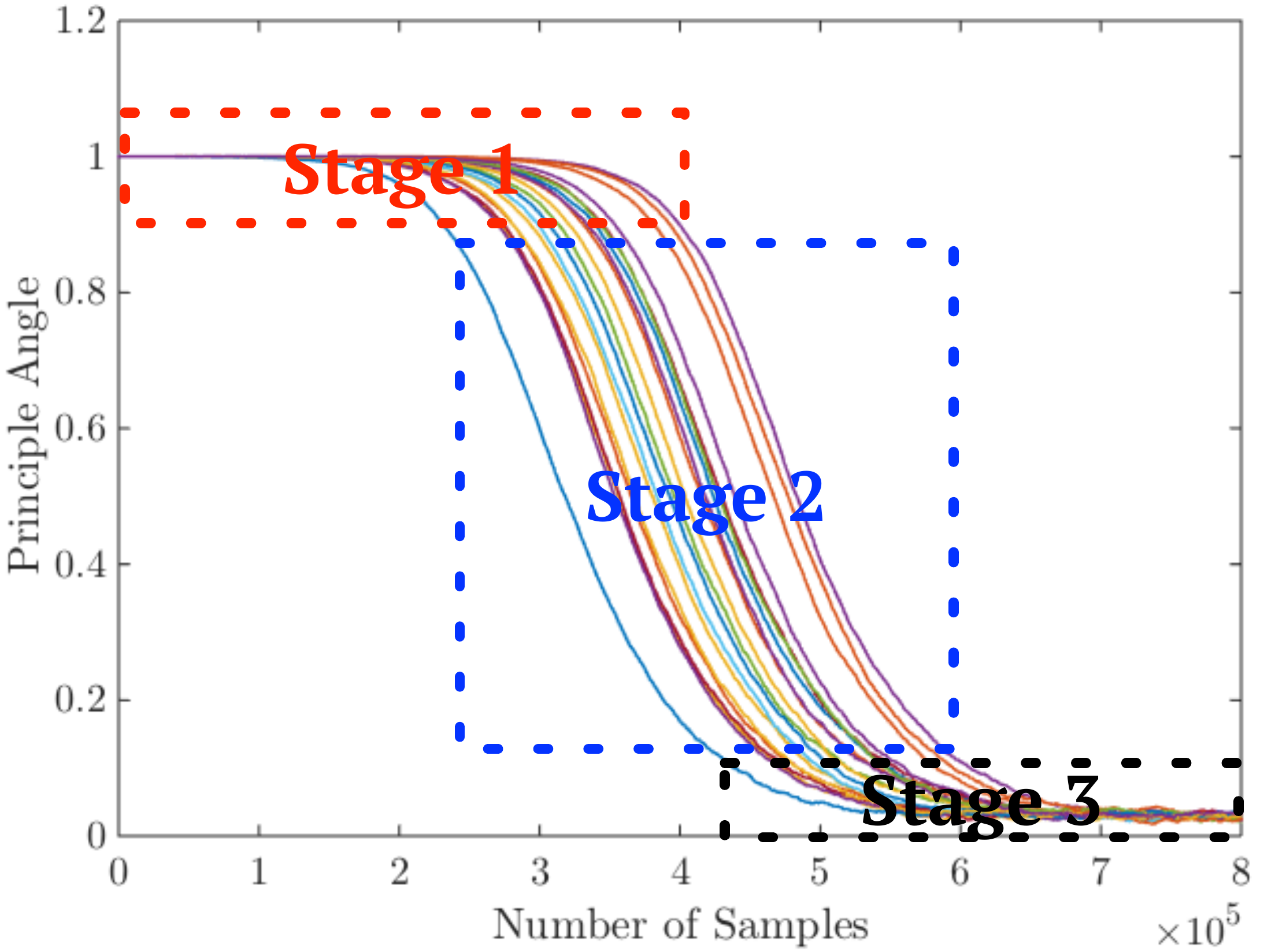}
\caption{Solution trajectories}
\label{3stages}
\end{subfigure}%
\begin{subfigure}[t]{0.5\linewidth}
\centering
\includegraphics[width = 0.8\textwidth]{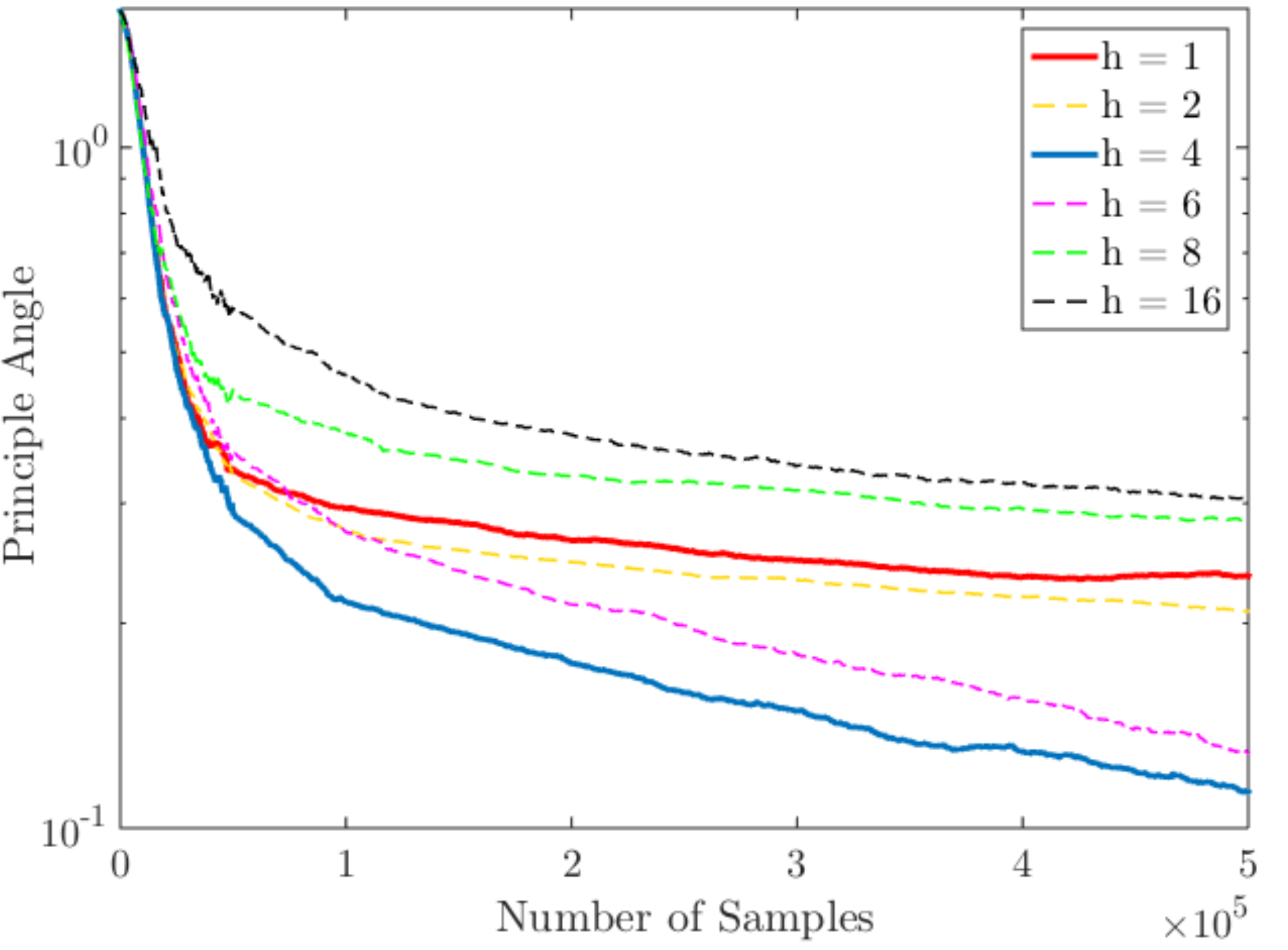}
\caption{Different block sizes}
\label{bestblock}
\end{subfigure}%

\begin{subfigure}[t]{0.5\linewidth}
\centering
\includegraphics[width = 0.8\textwidth]{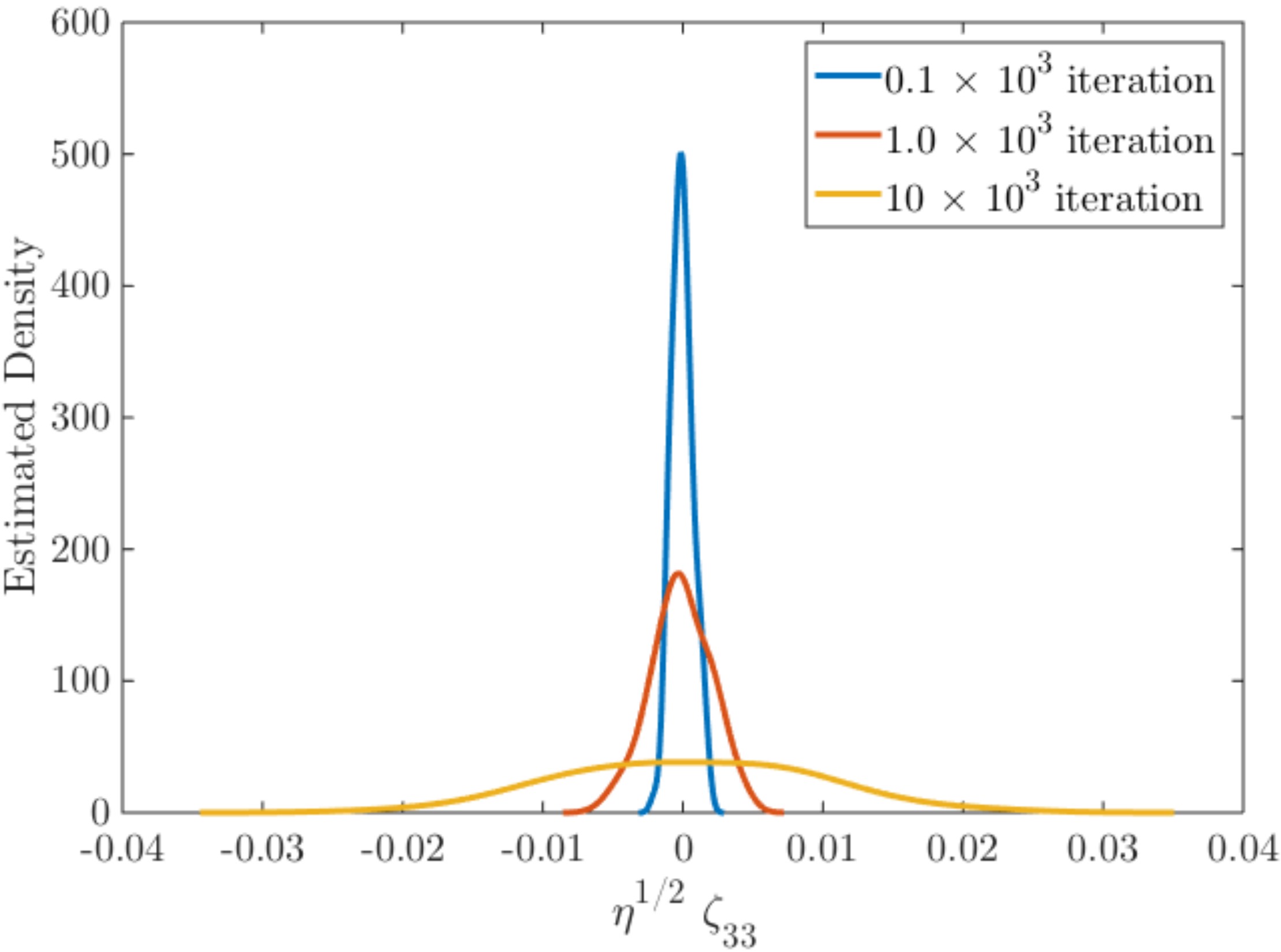}
\caption{Distribution of $\zeta_{33}(t)$}
\label{saddledist}
\end{subfigure}%
\begin{subfigure}[t]{0.5\linewidth}
\centering
\includegraphics[width = 0.8\textwidth]{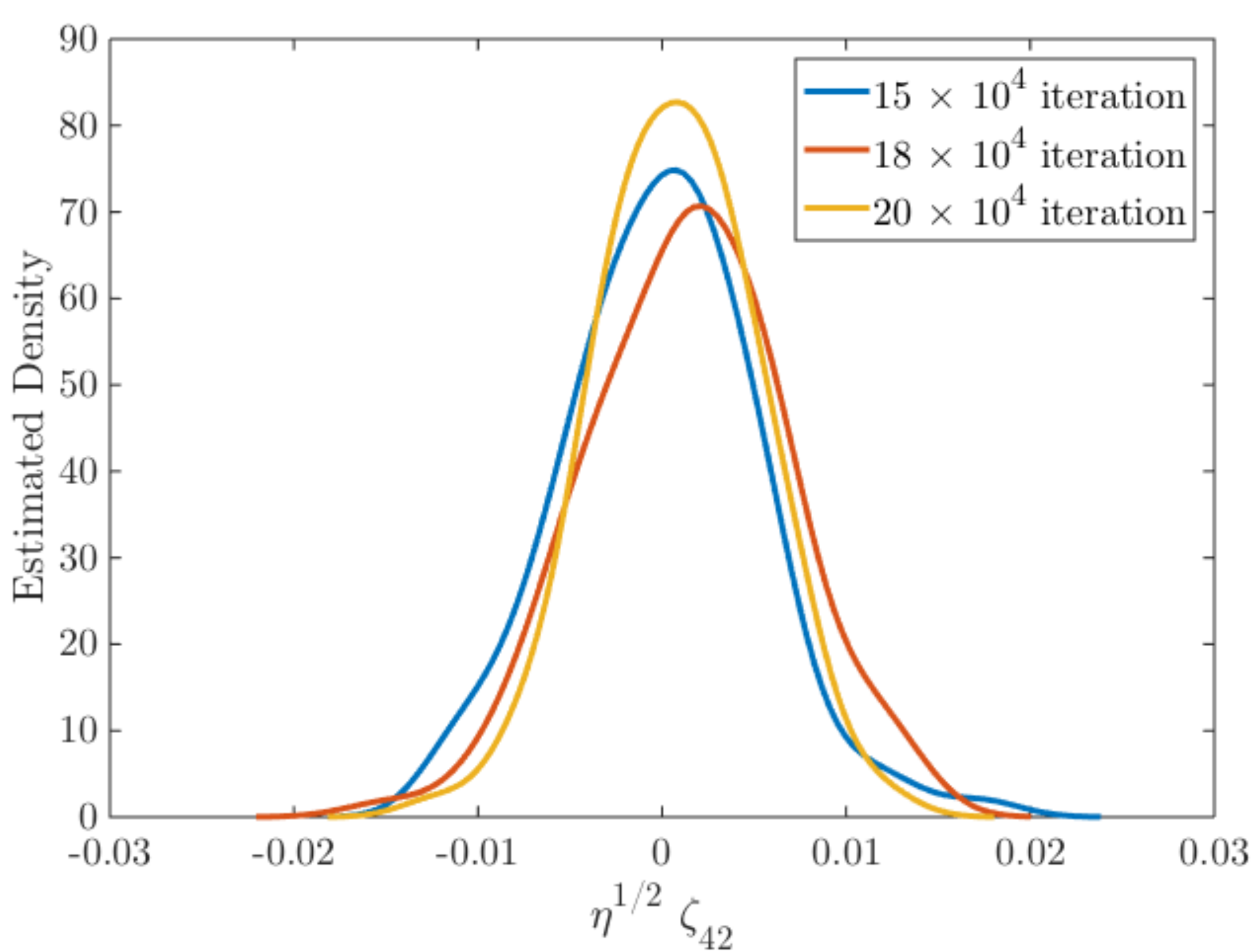}
\caption{Distribution of $\zeta_{42}(t)$}
\label{optimadist}
\end{subfigure}
\caption{Illustrations of various algorithmic behaviors in simulated examples:
(a) presents three stages of the algorithm; (b) compares the performance of different block sizes; (c) and (d) demonstrate the Ornstein-Uhlenbeck processes of $\zeta_{33}$ in stage 1 and $\zeta_{42}$ in stage 3.
}
\label{simulated}
\end{figure}

We further compare the performance of different block sizes of downsampling with step size annealing. We keep using Gaussian VAR model with $D = 0.9 D_0$ and
\begin{align*}
S = \textrm{diag}(1.45, 1.45,  1.45, 1.45, 1.45, 1.45, 1.45, 1.45, 1.45, 1.45, 1.45, 1.45, 1.45, 1.455, 1.455, 1.455).
\end{align*}
The eigengap is $\lambda_3 - \lambda_4 = 0.0025$. We run the algorithm with $5 \times 10^5$ samples and the chosen step sizes vary according to the number of samples $k$. Specifically, we set the step size $\eta = \eta_0 \times \frac{h}{4000}$ if $k < 2 \times 10^4$, $\eta = \eta_0 \times \frac{h}{8000}$ if $k \in [2 \times 10^4, 5 \times 10^4)$, $\eta = \eta_0 \times \frac{h}{48000}$ if $k \in [5 \times 10^4, 10 \times 10^4)$, and $\eta = \eta_0 \times \frac{h}{120000}$ if $k \geq 10 \times 10^4$. We choose $\eta_0$ in $\{0.125, 0.25, 0.5, 1, 2\}$ and report the final principle angles achieved by different block sizes $h$ in Table \ref{besteta}. Figure \ref{bestblock} presents the averaged principle angle over $5$ simulations with $\eta_0 = 0.5$. As can be seen, choosing $h=4$ yields the best performance. Specifically, the performance becomes better as $h$ increases from 1 to around 4. However, the performance becomes worse, when $h = 16$ because of the lack of iterations.

\begin{table}[htb!]
\centering
\begin{tabular}{| c | c | c | c | c | c |}
\hline
& $\eta_0 = 0.125$ & $\eta_0 = 0.25$ & $\eta_0 = 0.5$ & $\eta_0 = 1$ & $\eta_0 = 2$ \\\hline
$h = 1$ & 0.7775 & 0.3595 & \textbf{0.2320} & 0.2449 & 0.3773 \\\hline
$h = 2$  & 0.7792 & 0.3569 & \textbf{0.2080} & 0.2477 & 0.2290 \\\hline
$h = 4$  & 0.7892 & 0.3745 & \textbf{0.1130} & 0.3513 & 0.4730 \\\hline
$h = 6$  & 0.7542 & 0.3655 & \textbf{0.1287} & 0.3317 & 0.3983 \\\hline
$h = 8$  & 0.7982 & 0.3933 & \textbf{0.2828} & 0.3820 & 0.4102  \\\hline 
$h = 16$ & 0.7783  & 0.4324 & \textbf{0.3038} & 0.5647 & 0.6526 \\\hline
\end{tabular}
\caption{The final principle angles achieved by different block sizes with varying $\eta_0$.}
\label{besteta}
\end{table}

\subsection{Real Data}

We adopt the Air Quality dataset \citep{de2008field}, which contains 9358 instances of hourly averaged concentrations of totally 9 different gases in a heavily polluted area. We remove measurements with missing data and then normalize all the data points by subtracting their sample mean and dividing by their sample standard deviation. We aim to estimate the first 2 principle components of the series. We randomly initialize the algorithm, and choose the block size of downsampling to be 1, 3, 5, 10, and 60. Figure \ref{airquality} shows that the projection of each data point onto the leading and the second principle components. We also present the results of projecting data points onto the eigenspace of sample covariance matrix indicated by Batch in Figure \ref{airquality}. All the projections have been rotated such that the leading principle component is parallel to the horizontal axis. As can be seen, when $h=1$, the projection yields some distortion in the circled area. When $h=3$ and $h=5$, the projection results are quite similar to the Batch result. As $h$ increases, however, the projection displays obvious distortion again compared to the Batch result. The concentrations of gases are naturally time dependent. Thus, we deduce that the distortion for $h=1$ comes from the data dependency, while for the case $h=60$, the distortion comes from the lack of updates. This phenomenon coincides with our simulated data experiments.
\begin{figure}[htb!]
\centering
\includegraphics[width = 0.75\textwidth]{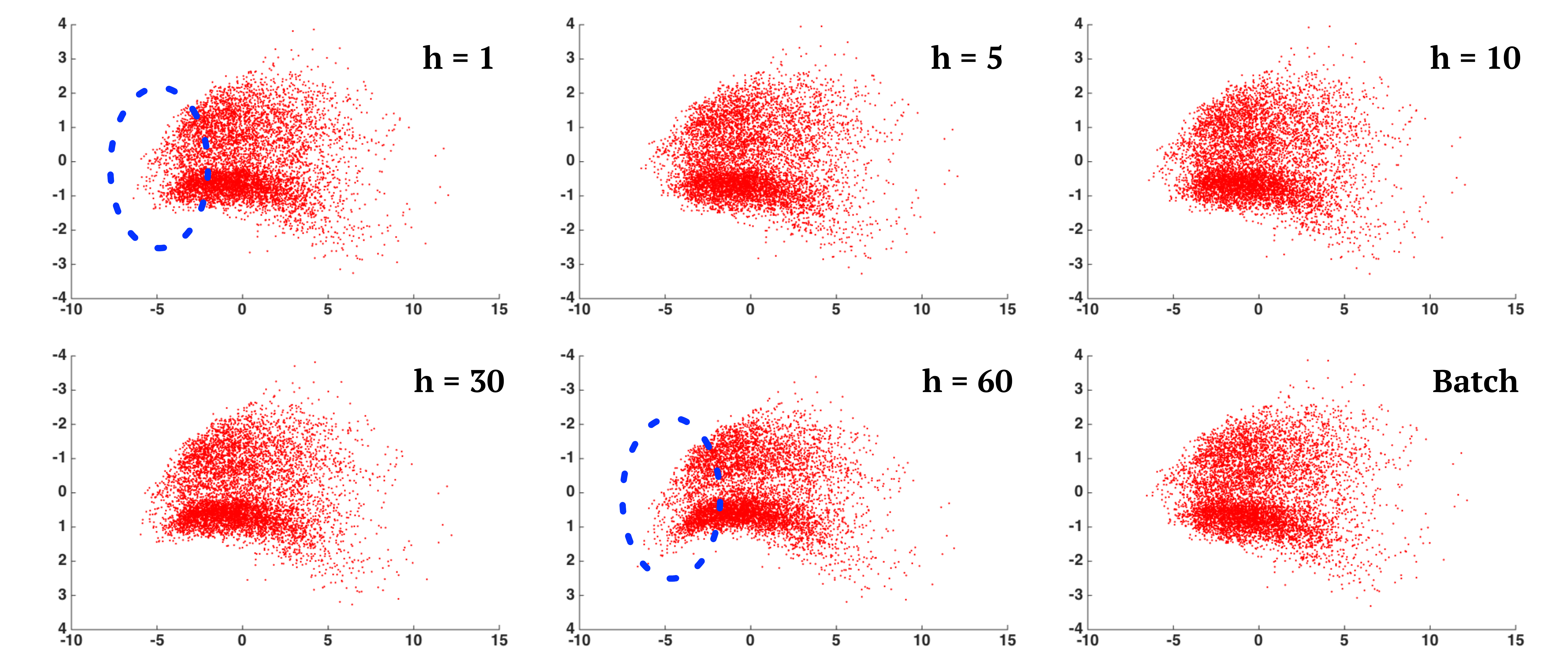}
\caption{Projections of air quality data onto the leading and the second principle components with different block sizes of downsampling. 
We highlight the distortions for $h=1$ and $h=60$.
}
\label{airquality}
\end{figure}



\section{Discussions}\label{discussion}

Our analysis requires $\Sigma$ to be positive definite in Assumption \ref{assump1}, which actually can be relaxed. Specifically, we can inject a small perturbation $\nu_k$ to $z_k$ at each iteration, where $\nu_k$'s are independently sampled from $N(0,\epsilon I)$, and $\epsilon$ is our prespecified optimization error. Then we are essentially recovering the span of the leading eigenvectors of the covariance matrix $\tilde{\Sigma} = \Sigma + \epsilon I$, which is identical to that of $\Sigma$.

We remark that our analysis characterizes how our proposed algorithm escapes from the saddle point. This is not analyzed in the related work, \cite{allen2016first}, since they use random initialization. Note that our analysis also applies to random initialization, and directly starts with the second stage.

Our analysis is inspired by diffusion approximations in existing applied probability literature \citep{glynn1990diffusion, freidlin1998random, kushner2003stochastic, ethier2009markov}, which target to capture the uncertainty of stochastic algorithms for general optimization problems. Without explicitly specifying the problem structures, these analyses usually cannot lead to concrete convergence guarantees. In contrast, we dig into the optimization landscape of the streaming PCA problem. This eventually allows us to precisely characterize the algorithmic dynamics and provide concrete convergence guarantees, which further lead to a deeper understanding of the uncertainty in nonconvex stochastic optimization.

We believe the following directions should be of interest:

\noindent $\bullet$ Our results are asymptotic. We need more analytical tools to bridge the asymptotic results to the algorithm. How to connect our analysis to nonasymptotic results should be an important direction.

\noindent $\bullet$ Our results consider a fixed step size $\eta \rightarrow 0$. However, the step size annealing yields good empirical performance. Thus, how to generalize the analysis to the step size annealing is another important direction.

\noindent $\bullet$ Our results are based on the geometric ergodicity assumption. How to weaken this assumption and generalize the analysis to a larger class of models for dependent data is a challenging but interesting future direction.


\bibliographystyle{ims}
\bibliography{ref}


\newpage
\onecolumn
\appendix
\section{Detailed Proofs in Section \ref{downsample}}
\subsection{Proof of Lemma \ref{unbiaslemma}}
\begin{proof}
We first assume the stationary distribution has zero mean and denote the covariance matrix as $\Sigma$. The total variation distance of $p^h(z, \cdot)$ and $\pi(\cdot)$ is equivalent to
\begin{align*}
\cD_{\textrm{TV}}(p^h(z, \cdot), \pi(\cdot)) = \frac{1}{2} \int \left |p^h(z, x) - \pi(x) \right | dx.
\end{align*}
Then we try to find the conditional expectation,
\begin{align*}
\E \left [z_{k+h}z_{k+h}^\top | z_k \right ] & = \int xx^\top p^h(z_k, x) dx \\
& = \int xx^\top (\pi(x) + p^h(z_k, x) - \pi(x)) dx \notag \\
& = \int xx^\top \pi(x) dx + \int xx^\top (p^h(z_k, x) - \pi(x)) dx \notag \\
& = \Sigma + \int xx^\top (p^h(z_k, x) - \pi(x)) dx. \notag
\end{align*}
We bound the second term by the following,
\begin{align*}
\left \lVert \int xx^\top (p^h(z_k, x) - \pi(x)) dx \right \rVert_2 & \leq \int \lVert x \rVert^2_2 \left |p^h(z_k, x) - \pi(x) \right | dx \\
& \leq \int_{\lVert x \rVert^2_2 \leq t} \lVert x \rVert^2_2 \left |p^h(z_k, x) - \pi(x) \right | dx + \int_{\lVert x \rVert_2 > t} \lVert x \rVert^2_2 \left |p^h(z_k, x) - \pi(x) \right | dx \notag \\
& \leq C_1 t \rho^h + \int_{\lVert x \rVert^2_2 > t} \lVert x \rVert^2_2 \left |p^h(z_k, x) - \pi(x) \right | dx \notag \\
& \leq C_1 t \rho^h + \int_{\lVert x \rVert^2_2 > t} \lVert x \rVert^2_2 p^h(z_k, x) dx + \int_{\lVert x \rVert^2_2 > t} \lVert x \rVert_2 \pi(x) dx \notag \\
& \leq C_1 t \rho^h + \int_t^\infty \mathbb{P}_{p^h} (\lVert x \rVert^2_2 > s) ds + \int_t^\infty \mathbb{P}_{\pi} (\lVert x \rVert^2_2 > s) ds. \notag
\end{align*}
By our assumption, $x$ is a Sub-Gaussian random vector, then $\mathbb{P}_{p^h} (\lVert x \rVert_2 > t) \leq C_2 \exp(-C_3 t^2)$ and $\mathbb{P}_{\pi} (\lVert x \rVert_2 > t) \leq C'_2 \exp(-C'_3 t^2)$. The integration is bounded by
\begin{align*}
\int_{\sqrt{t}}^\infty \exp(-s^2) ds = \int_0^\infty \exp(-(s + \sqrt{t})^2) ds \leq \exp(-t) \int_0^\infty \exp(-2s\sqrt{t}) ds = \frac{1}{\sqrt{t}} \exp(-t).
\end{align*}
Thus, we have $\left \lVert \int xx^\top (p^h(z_k, x) - \pi(x)) dx \right \rVert_2 \leq C_1 t \rho^h + C_2 \frac{1}{\sqrt{t}} e^{-C_3 t}$. Optimize over $t$ and neglect the exponential term, we pick $t = O\left (\rho^{-\frac{2h}{3}} \right)$ to reach $\left \lVert \int xx^\top (p^h(z_k, x) - \pi(x)) dx \right \rVert_2 \leq O(\rho^{h/3})$. Therefore, we have $\E \left [z_{k+h}z_{k+h}^\top | z_k \right ] = \Sigma + E \Sigma$ with $\lVert E \rVert_2 = O(\rho^{h/3})$, which implies that if we pick $h = O\left(\frac{1}{1-\rho} \log \frac{1}{\tau} \right)$, then we have $\lVert E \rVert_2 \leq \tau$.

For the general case, i.e., the stationary distribution has nonzero mean $\mu$, we proceed with double conditioning. Specifically, we calculate
\begin{align*}
\EE\left[(z_{k + 2h} - z_{k + h})(z_{k + 2h} - z_{k + h})^\top \big| z_k\right] = \EE \left[ \EE \left[(z_{k + 2h} - z_{k + h})(z_{k + 2h} - z_{k + h})^\top \big| z_{k + h}, z_k\right ] \Big | z_k\right].
\end{align*}
Then by the Markov property, the inner expectation is equal to $\EE \left[(z_{k + 2h} - z_{k + h})(z_{k + 2h} - z_{k + h})^\top \big| z_{k + h}\right ]$. By a similar reasoning to the zero mean case, we first calculate the conditional expectation $$\EE \left[(z_{k + 2h} - z_{k + h})(z_{k + 2h} - z_{k + h})^\top \big| z_{k + h}\right ] = \Sigma + \mu \mu^\top - \mu z_{k+h}^\top - z_{k+h} \mu^\top + z_{k+h}z_{k+h}^\top + W,$$ where the remainder $W$ satisfies $\lVert W \rVert_2 = O(\rho^{h/3})$. Then taking expectation conditioning on $z_k$, we can derive
\begin{align*}
\EE \left[\frac{1}{2}(z_{k + 2h} - z_{k + h})(z_{k + 2h} - z_{k + h})^\top \Big| z_k\right ] = \Sigma + E \Sigma ~~\textrm{with}~~\lVert E \rVert_2 = O\left(e^{h\kappa_\rho}\right).
\end{align*}
The calculation is a repetition of the zero mean case with the extra mean term $\mu$.
\end{proof}

\section{Detailed Proofs in Section \ref{theory}}
\subsection{Proof of Lemma \ref{upperbound}}\label{upperbound-proof}
\begin{proof}
We omit the time indicator $t$. Since $\overline{U}$ and $E_r$ has orthonormal columns, we have $\left \lVert E_r^\top \overline{U} \right \rVert_2 \leq 1$. Thus
\begin{align*}
\gamma_i(t) = \left\lVert e_i^\top \overline{U} \right\rVert_2 \leq \left\lVert e_i^\top \overline{U} (E_r^\top \overline{U})^{-1} (E_r^\top \overline{U}) \right\rVert_2 \leq \left\lVert e_i^\top \overline{U} (E_r^\top \overline{U})^{-1} \right\rVert_2 \left\lVert E_r^\top \overline{U} \right\rVert_2 \leq \left\lVert e_i^\top \overline{U} (E_r^\top \overline{U})^{-1} \right\rVert_2 = \tilde{\gamma}_i(t)
\end{align*}
\end{proof}

\subsection{Proof of Theorem \ref{stage2ode}}\label{proof4.5}
\begin{proof}
Compute the infinitesimal increments of $\tilde{\gamma}_i^2(t)$, which is defined to be $$\Delta \tilde{\gamma}^2_{i, s}(t) = \tilde{\gamma}_{i, s+1}^2 - \tilde{\gamma}_{i, s}^2.$$
The sequence $\{z_{sh}, \overline{U}_s\}$ forms a Markov chain. By \textit{Corollary 4.2 of chapter 7.4} of \cite{ethier2009markov}, once 
\begin{align*}
& b_i = \lim_{\eta \rightarrow 0} \E \left [\frac{\Delta \tilde{\gamma}_i^2(t)}{\eta} \bigg| \overline{U}_s, z_{sh}\right] < \infty, \\
& \sigma_i^2 = \lim_{\eta \rightarrow 0} \E \left[\frac{[\Delta \tilde{\gamma}_i^2(t)]^2}{\eta} \bigg| \overline{U}_s, z_{sh}\right] = 0,
\end{align*}
the sequence $\tilde{\gamma}_{i, s}^2(t)$ weakly converges to the solution of the following ODE,
\begin{align*}
d\tilde{\gamma}_i^2 = b_i \tilde{\gamma}_i^2 dt.
\end{align*}
Hence, we must find the mean and variance of $\Delta \tilde{\gamma}_{i, s}^2(t)$. For simplicity, we omit the subscript $s$.
\begin{align*}
\Delta \tilde{\gamma}^2_{i, s} & = e_i^\top (\overline{U} + \Delta \overline{U}) (E_r^\top (\overline{U} + \Delta \overline{U}))^{-1} (E_r^\top (\overline{U} + \Delta \overline{U}))^{-\top} (\overline{U} + \Delta \overline{U})^\top e_i - e_i^\top \overline{U} (E_r^\top \overline{U})^{-1} (E_r^\top \overline{U})^{-\top} \overline{U}^\top e_i \\
& = 2 e_i^\top \Delta \overline{U} (E_r^\top \overline{U})^{-\top} (E_r^\top \overline{U})^{-1} \overline{U}^\top e_i - 2 e_i^\top \overline{U} (E_r^\top \overline{U})^{-1} (E_r^\top \overline{U})^{-\top} (E_r^\top \Delta \overline{U})^{\top} (E_r^\top \overline{U})^{-\top} \overline{U}^\top e_i + O(\lVert \Delta \overline{U} \rVert_2^2) \notag\\
& = 2 \eta e_i^\top \overline{U} (E_r^\top \overline{U})^{-1} (E_r^\top \overline{U})^{-\top} \overline{U}^\top \overline{X} e_i - 2 \eta e_i^\top \overline{U} (E_r^\top \overline{U})^{-1} (E_r^\top \overline{U})^{-\top} (E_r^\top X \overline{U})^\top (E_r^\top \overline{U})^{-\top} \overline{U}^\top e_i + O(\eta^2 \lVert \overline{X} \rVert^2_2) \notag
\end{align*}
where $\Delta \overline{U} = \eta (I - \overline{U}\overline{U}^\top) \overline{X} \overline{U} + O(\eta^2 \overline{X}^2)$. We have used the fact that 
\begin{align*}
(E_r^\top (\overline{U} + \Delta \overline{U}))^{-1} & = ((E_r^\top \overline{U})(I + (E_r^\top \overline{U})^{-1} (E_r^\top \Delta \overline{U})))^{-1} \\
& = (I - (E_r^\top \overline{U})^{-1} (E_r^\top \Delta \overline{U}) + O(\Delta \overline{U}^2))(E_r^\top \overline{U})^{-1}.
\end{align*}
We only assume $\EE[\lVert \overline{X} \rVert_2^2] < \infty$ without assuming $\overline{X}$ is bounded. Thus, in order to take expectation over $\overline{X}$, we need a truncation argument. Write the SVD of $\overline{X}$ as $\overline{X} = V^\top S V$. Then $\overline{X}_n = V^\top (S \wedge n) V$ denotes the truncated $\overline{X}$ where $a \wedge b = \min(a, b)$ and $S \wedge n$ means to perform such an operation on each diagonal elements of $S$. Clearly, $\overline{X}_n$ has bounded norm $\lVert X_n \rVert_2 \leq n$. Thus, we can take expectation with this truncated random variable $\overline{X}_n$. Moreover, as $n$ increases, $\lVert \overline{X}_n \rVert_2$ also monotone increases to $\lVert \overline{X}(t) \rVert_2$. Then by the monotone convergence theorem, $\lim_{n \rightarrow \infty} \EE[\lVert \overline{X}_n \rVert^2_2] = \EE[ \lVert \overline{X} \rVert^2_2]$. This result allows us to take expectation on the infinitesimal increments $\Delta \tilde{\gamma}_{i, s}^2(t)$.

Taking expectation conditioning on $\overline{U}_s$ and $z_{sh}$, then dividing both sides by $\eta$, we have
\begin{align*}
\E [\frac{\Delta \gamma^2_{i, s}}{\eta} | \overline{U}_s, z_{sh}] & =  2 e_i^\top \overline{U} (E_r^\top \overline{U})^{-1} (E_r^\top \overline{U})^{-\top} \overline{U}^\top (I + E) \Lambda e_i \notag \\
& ~~- 2 e_i^\top \overline{U} (E_r^\top \overline{U})^{-1} (E_r^\top \overline{U})^{-\top} (E_r^\top (I + E) \Lambda \overline{U})^\top (E_r^\top \overline{U})^{-\top} \overline{U}^\top e_i + O(\eta \lVert \Lambda \rVert^2_2) \\
& = 2 e_i^\top \overline{U} (E_r^\top \overline{U})^{-1} (E_r^\top \overline{U})^{-\top} \overline{U}^\top \Lambda e_i + 2 e_i^\top \overline{U} (E_r^\top \overline{U})^{-1} (E_r^\top \overline{U})^{-\top} \overline{U}^\top E \Lambda e_i \notag\\
& ~~-2 e_i^\top \overline{U} (E_r^\top \overline{U})^{-1} (E_r^\top \overline{U})^{-\top} (E_r^\top \Lambda \overline{U})^\top (E_r^\top \overline{U})^{-\top} \overline{U}^\top e_i \notag \\
& ~~- 2 e_i^\top \overline{U} (E_r^\top \overline{U})^{-1} (E_r^\top \overline{U})^{-\top} (E_r^\top E \Lambda \overline{U})^\top (E_r^\top \overline{U})^{-\top} \overline{U}^\top e_i + O(\eta \lVert \Lambda \rVert^2_2) \notag \\
& =  2 \sigma_i \tilde{\gamma}_i^2(t) - 2 e_i^\top \overline{U} (E_r^\top \overline{U})^{-1}  \Lambda_r (E_r^\top \overline{U})^{-\top} \overline{U}^\top e_i \notag \\
& ~~+ 2 e_i^\top \overline{U} (E_r^\top \overline{U})^{-1} (E_r^\top \overline{U})^{-\top} \overline{U}^\top E \Lambda e_i - 2 e_i^\top \overline{U} (E_r^\top \overline{U})^{-1} (E_r^\top \overline{U})^{-\top} (E_r^\top E \Lambda \overline{U})^\top (E_r^\top \overline{U})^{-\top} \overline{U}^\top e_i \notag \\
& ~~+ O(\eta \lVert \Lambda \rVert^2_2). \notag
\end{align*}
Under the geometric ergodicity condition, we know $\lVert E \rVert_2 \leq \tau$, which implies $\lVert E \Lambda \rVert_2 \leq \tau \sigma_1$. Then we have
\begin{align*}
& e_i^\top \overline{U} (E_r^\top \overline{U})^{-1} \Lambda_r (E_r^\top \overline{U})^{-\top} \overline{U}^\top e_i \geq \lambda_r \tilde{\gamma}_{i, s}^2, \\
&\left |e_i^\top \overline{U} (E_r^\top \overline{U})^{-1} (E_r^\top \overline{U})^{-\top} \overline{U}^\top E \Lambda e_i \right| = O(\eta), \notag \\
& e_i^\top \overline{U} (E_r^\top \overline{U})^{-1} (E_r^\top \overline{U})^{-\top} (E_r^\top E \Lambda \overline{U})^\top (E_r^\top \overline{U})^{-\top} \overline{U}^\top e_i = O(\eta).
\end{align*}
Combining the above three bounds, we have
\begin{align*}
\lim_{\eta \rightarrow 0} \E [\frac{\Delta \gamma^2_{i, s}}{\eta} | \overline{U}_s, z_{sh}] \leq 2(\lambda_i - \lambda_r)\gamma_{i, s}^2.
\end{align*}
This upper bound also implies that $\lim_{\eta \rightarrow 0} \E [\frac{[\Delta \gamma_{i, s}^2]^2}{\eta} | \overline{U}_s, z_{sh}] = 0$, since the numerator is of order $O(\eta^2)$. Thus, we can show $\tilde{\gamma}_{i, \eta}^2(t)$ converges weakly to the solution of
\begin{align*}
d\tilde{\gamma}_i^2 = b_i \tilde{\gamma}_i^2 dt ~~\textrm{with}~~ b_i \leq 2(\lambda_i - \lambda_r).
\end{align*}
\end{proof}

\subsection{Proof of Theorem \ref{stage1sde}}
\begin{proof}\label{sdeproof}
We need the following lemma to bound the smallest eigenvalue of $\overline{U}^\top \Lambda \overline{U}$. Denote by $E_{\cA_r} = [e_{a_1}, e_{a_2}, \dots, e_{a_r}] \in \mathbb{R}^{m \times r}$ where $\cA_r = \{a_1, a_2, \dots, a_r\}$ denotes an index set of $\{1, 2, \dots, m\}$ such that $a_1 > a_2 > \dots > a_r$. Further denote by $\overline{\cA_r}$ the complement of $\cA_r$ in $\{1, 2, \dots, m\}$ and write $\overline{E}_{\cA_r} = E_{\overline{\cA}_r}$. Additionally, write $\Lambda_{\cA_r} = \textrm{diag}(\lambda_{a_1}, \lambda_{a_2}, \dots, \lambda_{a_r})$ and $\overline{\Lambda}_{\cA_r} = \Lambda_{\overline{\cA}_r}$.
\begin{lemma}\label{eigenbound}
Suppose $\lVert \overline{E}_{\cA_r}^\top \overline{U} \rVert^2_{\textrm{F}} \leq O(\delta)$ with $\overline{U} \in \mathbb{R}^{m \times r}$ having orthonormal columns, then
\begin{align*}
\sigma_{\min} (\overline{U}^\top \Lambda \overline{U}) \geq \lambda_{a_r} - O(\delta)
\end{align*}
\end{lemma}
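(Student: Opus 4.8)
The plan is to reduce the claim to the variational (Courant--Fischer) characterization of the smallest eigenvalue. By Assumption \ref{assump1} the matrix $\Lambda$ is positive definite and $\overline{U}$ has full column rank, so $\overline{U}^\top \Lambda \overline{U}$ is symmetric positive definite and $\sigma_{\min}(\overline{U}^\top \Lambda \overline{U}) = \min_{\lVert v \rVert_2 = 1} v^\top \overline{U}^\top \Lambda \overline{U} v$. I would fix an arbitrary unit vector $v \in \mathbb{R}^r$, set $w = \overline{U} v \in \mathbb{R}^m$, note that orthonormality of the columns of $\overline{U}$ gives $\lVert w \rVert_2 = \lVert v \rVert_2 = 1$, and expand in the eigenbasis of $\Lambda$ to get $v^\top \overline{U}^\top \Lambda \overline{U} v = w^\top \Lambda w = \sum_{i=1}^m \lambda_i w_i^2$.

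Next I would split this sum into the indices $i \in \cA_r$ and $i \in \overline{\cA}_r$. For the first group, every $i \in \cA_r$ satisfies $\lambda_i \geq \lambda_{a_r}$ because $\lambda_{a_r}$ is the smallest eigenvalue indexed by $\cA_r$ (consistent with $a_r$ being the largest element of $\cA_r$, as used in Theorem \ref{stage1sde}), so $\sum_{i \in \cA_r} \lambda_i w_i^2 \geq \lambda_{a_r} \sum_{i \in \cA_r} w_i^2$. For the second group, positivity of the eigenvalues lets me discard it: $\sum_{i \in \overline{\cA}_r} \lambda_i w_i^2 \geq 0$. Using $\sum_{i \in \cA_r} w_i^2 = 1 - \sum_{i \in \overline{\cA}_r} w_i^2$ I obtain $w^\top \Lambda w \geq \lambda_{a_r}(1 - \sum_{i \in \overline{\cA}_r} w_i^2)$, and the tail mass is precisely what the hypothesis bounds: $\sum_{i \in \overline{\cA}_r} w_i^2 = \lVert \overline{E}_{\cA_r}^\top w \rVert_2^2 = \lVert \overline{E}_{\cA_r}^\top \overline{U} v \rVert_2^2 \leq \lVert \overline{E}_{\cA_r}^\top \overline{U} \rVert_2^2 \leq \lVert \overline{E}_{\cA_r}^\top \overline{U} \rVert_{\textrm{F}}^2 = O(\delta)$, using $\lVert v \rVert_2 = 1$ and that the Frobenius norm dominates the spectral norm.

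Combining the estimates gives $v^\top \overline{U}^\top \Lambda \overline{U} v \geq \lambda_{a_r} - \lambda_{a_r} O(\delta) \geq \lambda_{a_r} - O(\delta)$, where the bounded factor $\lambda_{a_r} \leq \lambda_1$ is absorbed into the constant; taking the minimum over unit vectors $v$ yields $\sigma_{\min}(\overline{U}^\top \Lambda \overline{U}) \geq \lambda_{a_r} - O(\delta)$, which is the claim. I do not expect a genuine obstacle here: the proof is a couple of lines, and the only things to watch are (i) fixing the index convention so that $\lambda_{a_r}$ really is $\min_{i \in \cA_r} \lambda_i$, and (ii) carrying along the harmless multiplicative constant on the $O(\delta)$ perturbation. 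As a sanity check, the same bound also drops out of Weyl's inequality applied to $\overline{U}^\top \Lambda \overline{U}$ versus the matrix obtained by zeroing the $\overline{\cA}_r$-rows of $\overline{U}$, whose discrepancy has spectral norm $O(\delta)$.
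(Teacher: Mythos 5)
Your proof is correct, and the decomposition you use --- split $\Lambda = E_{\cA_r}\Lambda_{\cA_r}E_{\cA_r}^\top + \overline{E}_{\cA_r}\overline{\Lambda}_{\cA_r}\overline{E}_{\cA_r}^\top$, drop the nonnegative $\overline{\cA}_r$-block, lower bound the $\cA_r$-block by $\lambda_{a_r}$, and control the tail mass $\lVert\overline{E}_{\cA_r}^\top\overline{U}v\rVert_2^2 \leq \lVert\overline{E}_{\cA_r}^\top\overline{U}\rVert_{\textrm{F}}^2 = O(\delta)$ --- is the same as the paper's. One genuine improvement you make: the paper's appendix carries out this estimate only for $v = e'_j$, i.e., it bounds the diagonal entries of $\overline{U}^\top\Lambda\overline{U}$ from below, which by itself does \emph{not} bound $\sigma_{\min}$ (for a PSD matrix the smallest eigenvalue is $\leq$ every diagonal entry, so lower-bounding the diagonal gives nothing). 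Your version runs the computation over an arbitrary unit $v$ and invokes the Rayleigh--Ritz/Courant--Fischer characterization, which is what actually closes the argument. You also correctly resolve the index-ordering convention: the appendix introduces $\cA_r$ with $a_1 > a_2 > \dots > a_r$, but that would make $\lambda_{a_r}$ the \emph{largest} of $\{\lambda_{a_i}\}$ and the lemma false; the statement of Theorem \ref{stage1sde} and the use of $K_{rr}=\lambda_r-\lambda_{r+1}$ in Proposition \ref{saddletime} make clear that $a_r$ is the largest index in $\cA_r$, so $\lambda_{a_r}=\min_{i\in\cA_r}\lambda_i$, exactly as you take it. In short, same route, but you supplied the two small corrections the paper's writeup needs.
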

\begin{proof}
Since $\lVert \overline{E}_{\cA_r}^\top \overline{U} \rVert_{\textrm{F}}^2 \leq O(\delta)$, we have $\lVert E_{\cA_r}^\top \overline{U} \rVert^2_2 = r - \lVert \overline{E}_{\cA_r}^\top \overline{U} \rVert_{\textrm{F}}^2 \geq r-O(\delta)$. Therefore, $\lVert \overline{U} e'_{a_i} \rVert_2 \geq 1 - O(\delta)$ for $i \in \{1, \dots, r\}$. Thus, we have
\begin{align*}
e'^\top_j \overline{U}^\top \Lambda \overline{U} e'_j & = e'^\top_j \overline{U}^\top (E_{\cA_r}^\top \Lambda_{\cA_r} E_{\cA_r} + \overline{E}_{\cA_r}^\top \overline{\Lambda} \overline{E}_{\cA_r}) \overline{U} e'_j \\
& \geq \lambda_{a_r} e'^\top_j \overline{U}^\top E_{\cA_r}^\top E_{\cA_r} \overline{U} e'_j + e'^\top_j \overline{U}^\top \overline{E}_{\cA_r}^\top \Lambda \overline{E}_{\cA_r} \overline{U} e'_j \notag \\
& \geq \lambda_{a_r} - O(\delta). \notag
\end{align*}
\end{proof}

Now we turn to the proof of Theorem \ref{stage1sde}. We omit $(t)$ if there is no confusion. Denote by $\Delta \zeta_{ij, \eta}(t) = \zeta_{ij, \eta}(t+\eta) - \zeta_{ij, \eta}(t)$ and $\overline{Z}(t) = \eta^{-1/2} \overline{U}^\eta(t)$. We must show the mean and variance of $\Delta \zeta_{ij, \eta}(t)$ satisfies
\begin{align*}
& K_{ij} = \lim_{\eta \rightarrow 0} \E \left [\frac{\Delta \zeta_{ij, \eta}(t)}{\eta} \bigg| \overline{U}, z_{\lfloor t/\eta \rfloor h + h} \right ] < \infty, \\
& G_{ij}^2 = \lim_{\eta \rightarrow 0} \E \left [\frac{[\Delta \zeta_{ij, \eta}(t)]^2}{\eta} \bigg| \overline{U}, z_{\lfloor t/\eta \rfloor h + h} \right ] < \infty.
\end{align*}
Then the sequence $\zeta_{ij, \eta}(t)$ weakly converges to the solution of the following SDE
\begin{align*}
d\zeta_{ij} = K_{ij} \zeta_{ij} dt + G_{ij} dB_t,
\end{align*}
where $B_t$ is the standard Brownian motion. In fact, we have
\begin{align*}
\E \left [\Delta \zeta_{ij, \eta}(t) | \overline{U}, z_{\lfloor t/\eta \rfloor h + h} \right] & = \E \left [\eta^{-1/2} e'^\top_j Q (\overline{Z}(t+\eta) - \overline{Z}(t))^\top e_i \vert \overline{U}, z_{\lfloor t/\eta \rfloor h + h} \right ] \\
& = \eta^{1/2} e'^\top_j Q \E \left [\overline{U}^\top \overline{X} - \overline{U}^\top \overline{X} \overline{U} \overline{U}^\top \vert \overline{U}, z_{\lfloor t/\eta \rfloor h + h} \right ] e_i + O(\eta^{3/2} \lVert \Lambda \rVert_2^2)\notag\\
& = \eta e'^\top_j Q \overline{Z}^\top (\Lambda + E \Lambda) e_i - \eta e_j'^\top Q (\overline{U}^\top (\Lambda + E \Lambda) \overline{U}) \overline{Z}^\top e_i  + O(\eta^{3/2} \lVert \Lambda \rVert^2_2) \notag\\
& = \eta \lambda_i e'^\top_j Q \overline{Z}^\top e_i - \eta e'^\top_j Q (\overline{U}^\top \Lambda \overline{U}) \overline{Z}^\top e_i + \eta e'^\top_j Q \overline{Z}^\top E \Lambda e_i - \eta e'^\top_j Q (\overline{U}^\top E \Lambda \overline{U}) \overline{Z}^\top e_i \notag \\
&~~+ O(\eta^{3/2} \lVert \Lambda \rVert_2^2) \notag\\
& = \eta \lambda_i \zeta_{ij, \eta} - \eta e'^\top_j Q (\overline{U}^\top \Lambda \overline{U}) \overline{Z}^\top e_i + \eta e'^\top_j Q \overline{Z}^\top E \Lambda e_i - \eta e'^\top_j Q (\overline{U}^\top E \Lambda \overline{U}) \overline{Z}^\top e_i + O(\eta^{3/2} \lVert \Lambda \rVert_2^2). \notag
\end{align*}
By the Lemma, we have $\sigma_{\min}\left( \overline{U}^\top \Lambda \overline{U}\right) \geq \lambda_{a_r} - O(\delta)$. Observe that $\lVert E \Lambda \rVert_2 = O(\eta)$, thus we obtain
\begin{align*}
\lim_{\eta \rightarrow 0} \E \left [\frac{\Delta \zeta_{ij, \eta}(t)}{\eta} \bigg| \overline{U}, z_{\lfloor t/\eta \rfloor h} \right ] = K_{ij} \zeta_{ij, \eta} \textrm{~~with~~} K_{ij} \in [\lambda_i - \lambda_1, \lambda_i - \lambda_{a_r}].
\end{align*}
Note that when $j = r$, we have $K_{ir} = \lambda_i - \lambda_{a_r}$, because the equality $e'^\top_r Q (\overline{U}^\top \Lambda \overline{U}) \overline{Z}^\top e_i = \lambda_{a_r} e_r'^\top Q \overline{Z}^\top e_i + O(\delta)$ holds. The variance is
\begin{align*}
\E \left[[\Delta \zeta_{ij, \eta}(t)]^2 | \overline{U}, z_{\lfloor t/\eta \rfloor h + h} \right ] & = \E \left [\left(\eta^{-1/2} e'^\top_j Q (\overline{Z}(t+\eta) - \overline{Z}(t))^\top e_i\right)^2 \bigg| \overline{U}, z_{\lfloor t/\eta \rfloor h + h} \right ] \\
& = \eta \E \left [\left (e'^\top_j Q \overline{U}^\top \overline{X} (I - \overline{U} \overline{U}^\top) e_i\right )^2 \bigg| \overline{U}, z_{\lfloor t/\eta \rfloor h + h} \right ] + O(\eta^2 \lVert \Lambda \rVert_2^2). \notag
\end{align*}
Observe that we have $\overline{U}^\top (I - \overline{U} \overline{U}^\top) = 0$, therefore, $\overline{U} Q^\top e'_j$ and $(I - \overline{U} \overline{U}^\top) e_i$ are orthogonal. Moreover, the norm of these two vectors satisfies $\left \lVert e'^\top_j Q \overline{U} \right \rVert_2 \approx 1$ and $\left \lVert (I - \overline{U} \overline{U}^\top) e'_j \right \rVert_2 \leq 1$. Hence, by the assumption that $\overline{X}$ has bounded second moment, we have
\begin{align*}
\lim_{\eta \rightarrow 0} \E \left [\frac{[\Delta \zeta_{ij, \eta}(t)]^2}{\eta} \bigg | \overline{U}, z_{\lfloor t/\eta \rfloor h + h} \right ] < \infty.
\end{align*}
\end{proof}

\subsection{Proof of Proposition \ref{saddletime}}\label{saddletime-proof}
\begin{proof}
Since we start the algorithm at the saddle point and $K_{rr} = \lambda_r - \lambda_{r+1}$. The continuous time process $\zeta_{rr}(t)$ is approximately Gaussian distributed with mean 0 and variance $\frac{G_{rr}^2}{2K_{rr}}(e^{2K_{rr}t} - 1)$. We need the following condition,
\begin{align*}
\mathbb{P}\left (\left \lVert e_r^\top \overline{U}(t) \right \rVert_2^2 \geq \delta^2 \right ) \geq \mathbb{P}\left (\zeta^2_{rr}(t) \geq \eta^{-1} \delta^2 \right ) \geq 1 - \nu,
\end{align*}
which is equivalent to
\begin{align*}
\mathbb{P}\left(\zeta^2_{rr}(t) \geq \eta^{-1} \delta^2\right) = \mathbb{P} \left (\frac{|\zeta_{rr}(t)|}{\sqrt{\frac{G_{rr}^2}{2K_{rr}}(e^{2K_{rr}t} - 1)}} \geq \frac{\eta^{-1/2}\delta}{\sqrt{\frac{G_{rr}^2}{2K_{rr}}(e^{2K_{rr}t} - 1)}} \right ).
\end{align*}
Note that $\frac{\zeta_{rr, \eta}(t)}{\sqrt{\frac{G_{rr}^2}{2K_{rr}}(e^{2K_{rr}t} - 1)}}$ converges weakly to $\frac{\zeta_{rr}(t)}{\sqrt{\frac{G_{rr}^2}{2K_{rr}}(e^{2K_{rr}t} - 1)}}$, which is a standard Gaussian random variable. Let $\Phi(\cdot)$ denotes the standard Gaussian CDF, then we have
\begin{align*}
\eta^{-1/2} \delta \leq - \Phi^{-1}\left(\frac{1 - \nu/2}{2}\right) \sqrt{\frac{G_{rr}^2}{2K_{rr}}(e^{2K_{rr}t} - 1)}.
\end{align*}
Rearrange the above terms, we get
\begin{align*}
T_1 = \frac{1}{2(\lambda_r - \lambda_{r+1})} \log\left (\frac{2(\lambda_r - \lambda_{r+1})\eta^{-1}\delta^2}{[\Phi^{-1}(\frac{1-\nu/2}{2})]^2 G_{rr}^2} + 1\right).
\end{align*}
\end{proof}

\subsection{Proof of Proposition \ref{stage2time}}\label{stage2time-proof}
\begin{proof}
We know $\left \lVert \cos \Theta(\overline{E}_r, \overline{U}^\eta(t)) \right \rVert^2_{\textrm{F}} = \sum_{i=r+1}^m \gamma_{i, \eta}^2(t)$. Then using the upper bound $\tilde{\gamma}_i^2(t)$, we have
\begin{align*}
\left\lVert \cos \Theta(\overline{E}_r, \overline{U}^\eta(t)) \right\rVert^2_{\textrm{F}} = \sum_{i=r+1}^m \gamma_{i, \eta}^2(t) \leq \sum_{i=r+1}^m \tilde{\gamma}_{i, \eta}^2(t) = \sum_{i=r+1}^m \tilde{\gamma}_i^2(0) e^{b_i t} \leq \sum_{i=r+1}^m \tilde{\gamma}_i^2(0) e^{2(\lambda_r - \lambda_{r+1}) t}.
\end{align*}
In order for $\left \lVert \cos \Theta(\overline{E}_r, \overline{U}^\eta(T_2)) \right \rVert^2_{\textrm{F}} \leq \delta^2$, we need at most $T_2$ time such that
\begin{align*}
\sum_{i=r+1}^m \tilde{\gamma}_i^2(0) e^{2(\lambda_{r+1} - \lambda_r) T_2} \leq \delta^2.
\end{align*}
Since the algorithm has escaped from the saddle point, we have $\left \lVert e_{r+1}^\top \overline{U} \right \rVert_2^2 \leq 1 - \delta^2$ and $\left \lVert E_r^\top \overline{U} \right \rVert_2^2 \geq \delta^2$. Thus, the initial value satisfies $\sum_{i=r+1}^m \tilde{\gamma}_i^2(0) \leq (1-\delta^2)\delta^{-2}$. Taking logarithm on both sides yields
\begin{align*}
T_2 = \frac{1}{2(\lambda_r - \lambda_{r+1})} \log \frac{1-\delta^2}{\delta^4} = \frac{1}{\lambda_r - \lambda_{r+1}} \log \frac{\sqrt{1-\delta^2}}{\delta^2}.
\end{align*}
Then for a sufficiently small $\eta$, we have
\begin{align}
\PP\left(\sum_{i=r+1}^m \gamma_{i, \eta}^2(T_2) \leq \delta^2\right) \geq \frac{3}{4}, \notag
\end{align}
with $T_2 \asymp \frac{1}{\lambda_r - \lambda_{r+1}} \log \frac{1}{\delta^2}$.
\end{proof}

\subsection{Proof of Theorem \ref{stage3sde-thm}}\label{stage3sde-thm-proof}
\begin{proof}
The technique is almost the same as in Theorem \ref{stage1sde}. We have
\begin{align*}
\E \left [\Delta \zeta_{ij, \eta}(t) | \overline{U}, z_{\lfloor t/\eta \rfloor h + h} \right ] & = \E\left [\eta^{-1/2} e'^\top_j Q (\overline{Z}(t+\eta) - \overline{Z}(t))^\top e_i | \overline{U}, z_{\lfloor t/\eta \rfloor h + h}\right] \\
& = \eta \sigma_i \zeta_{ij, \eta} - \eta e'^\top_j Q (\overline{U}^\top \Lambda \overline{U}) \overline{Z}^\top e_i + \eta e'^\top_j Q \overline{Z}^\top E \Lambda e_i - \eta e'^\top_j Q (\overline{U}^\top E \Lambda \overline{U}) \overline{Z}^\top e_i + O(\eta^{3/2} \lVert \Lambda \rVert_2^2), \notag
\end{align*}
and the variance satisfies
\begin{align*}
\E \left [(\Delta \zeta_{ij, \eta}(t))^2 | \overline{U}(t), z_{\lfloor t/\eta \rfloor h + h} \right] & = \E\left [\eta^{-1/2} e'^\top_j Q (\overline{Z}(t+\eta) - \overline{Z}(t))^\top e_i | \overline{U}, z_{\lfloor t/\eta \rfloor h + h}\right] \\
& = \eta \E\left [(e'^\top_j Q \overline{U}^\top \overline{X} (I - \overline{U} \overline{U}^\top) e_i)^2 | \overline{U}, z_{\lfloor s/\eta \rfloor h + h}\right] + O(\eta^2 \lVert \Lambda \rVert_2^2). \notag
\end{align*}
Thus, with $\sigma_{\min} (\overline{U}^\top \Lambda \overline{U}) \geq \lambda_r - O(\delta)$ by Lemma \ref{eigenbound}, we have
\begin{align*}
& \lim_{\eta \rightarrow 0} \E\left [\frac{\Delta \zeta_{ij, \eta}(t)}{\eta} | \overline{U}, z_{\lfloor t/\eta \rfloor h + h}\right] = K_{ij} \zeta_{ij,\eta}(t) \textrm{~~with~~} K_{ij} \in [\lambda_i - \lambda_1, \lambda_i - \lambda_r], \\
& \lim_{\eta \rightarrow 0} \E \left[\frac{[\Delta \zeta_{ij, \eta}(t)]^2}{\eta} | \overline{U}, z_{\lfloor t/\eta \rfloor h + h} \right] < \infty.
\end{align*}
\end{proof}

\subsection{Proof of Proposition \ref{stage3time}}\label{stage3time-proof}
\begin{proof}
The proof is an application of Markov's inequality. Observe again that $\left \lVert \cos \Theta(\overline{E}_r, \overline{U}(t)) \right \rVert^2_{\textrm{F}} = \eta \sum_{i=r+1}^m \sum_{j=1}^r \zeta_{ij}^2(t)$. The expectation of $\zeta_{ij}^2(t)$ can be found as follows,
\begin{align*}
\E[\zeta_{ij}^2(t)] & = \zeta_{ij}^2(0) e^{2K_{ij}t} + \frac{G_{ij}^2}{2K_{ij}}(e^{2K_{ij}t} - 1) \\
& \leq \zeta_{ij}^2(0) e^{2(\lambda_{r+1} - \lambda_r)t} + \frac{G_{ij}^2}{2(\lambda_r - \lambda_{r+1})}. \notag
\end{align*}
By Markov's inequality, we have
\begin{align*}
\mathbb{P}\left (\left \lVert \cos \Theta(\overline{E}_r, \overline{U}(t)) \right \rVert^2_{\textrm{F}} > \epsilon \right) & \leq \frac{\E\left[\eta \sum_{i=r+1}^m \sum_{j=1}^r \zeta_{ij}^2(t)\right]}{\epsilon} \\
& \leq \frac{\eta}{\epsilon} \sum_{i=r+1}^m \sum_{j=1}^r \zeta_{ij}^2(0) e^{2(\lambda_{r+1} - \lambda_r)t} + \frac{\eta}{\epsilon} r\frac{G_m}{2(\lambda_r - \lambda_{r + 1})}. \notag \\
\end{align*}
Note that $\left \lVert \cos \Theta(\overline{E}_r, \overline{U}^\eta(t)) \right \rVert^2_{\textrm{F}}$ weakly converges to $\left \lVert \cos \Theta(\overline{E}_r, \overline{U}(t)) \right \rVert^2_{\textrm{F}}$, then we need at most $T_3$ time satisfying
\begin{align*}
\frac{\eta}{\epsilon} \sum_{i=r+1}^m \sum_{j=1}^r \zeta_{ij}^2(0) e^{2(\lambda_{r+1} - \lambda_r)T_3} + \frac{\eta}{\epsilon} r\frac{G_m}{2(\lambda_r - \lambda_{r + 1})} \leq \frac{1}{8}.
\end{align*}
Rearrange and combine with $\eta \sum_{i=r+1}^m \sum_{j=1}^r \zeta_{ij}^2(0) \leq \delta^2$, we get
\begin{align*}
T_3 = \frac{1}{2(\lambda_r - \lambda_{r+1})} \log \left (\frac{8(\lambda_r - \lambda_{r+1})\delta^2}{(\lambda_r - \lambda_{r+1})\epsilon - 4 \eta r G_m} \right).
\end{align*}
\end{proof}

\subsection{Proof of Corollary \ref{totalcor}}\label{totalcor-proof}
\begin{proof}
We list the time upper bound given in the Stage 1, Stage 2 and Stage 3, 
\begin{align*}
& T_1 = \frac{1}{2(\lambda_r - \lambda_{r+1})} \log\left(\frac{2(\lambda_r - \lambda_{r+1})\eta^{-1}\delta^2}{[\Phi^{-1}(\frac{1-\nu/2}{2})]^2 G_{rr}^2} + 1\right), \\
& T_2 = \frac{1}{\lambda_r - \lambda_{r+1}} \log \frac{1}{\delta^2}, \notag \\
& T_3 = \frac{1}{2(\lambda_r - \lambda_{r+1})} \log \left (\frac{8(\lambda_r - \lambda_{r+1})\delta^2}{(\lambda_r - \lambda_{r+1})\epsilon - 4 \eta r G_m} \right). \notag
\end{align*}
Choose the step size $\eta$ satisfies
\begin{align*}
\eta \asymp \frac{(\lambda_r - \lambda_{r+1})\epsilon}{5rG_m}.
\end{align*}
With such a choice of $\eta$ and $\delta = O(\eta^{1/2})$, we have
\begin{align*}
\log \left (\frac{(\lambda_r - \lambda_{r+1})\delta^2}{(\lambda_r - \lambda_{r+1})\epsilon - 4 \eta r G_m} \right) \asymp \log \frac{\lambda_r - \lambda_{r+1}}{rG_m}. 
\end{align*}
The total time $T$ is upper bounded by
\begin{align*}
T & = T_1 + T_2 + T_3 \\
& = \frac{1}{2(\lambda_r - \lambda_{r+1})} \log\left(\frac{2(\lambda_r - \lambda_{r+1})\eta^{-1}\delta^2}{[\Phi^{-1}(\frac{1-\nu/2}{2})]^2 G_{rr}^2} + 1\right) + \frac{1}{\lambda_r - \lambda_{r+1}} \log \frac{1}{\delta^2} \notag \\
& ~~+ \frac{1}{2(\lambda_r - \lambda_{r+1})}  \log \left (\frac{8(\lambda_r - \lambda_{r+1})\delta^2}{(\lambda_r - \lambda_{r+1})\epsilon - 4 \eta r G_m} \right) \notag \\
& \asymp \frac{1}{\lambda_r - \lambda_{r+1}} + \frac{1}{\lambda_r - \lambda_{r+1}} \log \frac{rG_m}{\epsilon(\lambda_r - \lambda_{r+1})} + \frac{1}{\lambda_r - \lambda_{r+1}} \log \frac{\lambda_r - \lambda_{r+1}}{rG_m} \\
& \asymp \frac{1}{\lambda_r - \lambda_{r+1}} \log \frac{rG_m}{\epsilon(\lambda_r - \lambda_{r+1})}.
\end{align*}
\end{proof}

\end{document}